\renewcommand{\ge}{\geqslant}
\renewcommand{\le}{\leqslant}
\newcommand{\Om}{\mathbf\Omega}
\newcommand{\I}[1]{\mathbf 1_{\left\{#1\right\}}}
\newcommand{\Exp}[1]{\exp\left(#1\right)}
\newtheorem{tm}{Theorem}[section]
\newtheorem{ex}[tm]{Example}
\newtheorem{re}[tm]{Remark}
\newtheorem{lm}[tm]{Lemma} 
\newtheorem{co}[tm]{Corollary}
\theoremstyle{definition}
\newtheorem{df}[tm]{Definition}
\newcommand{\X}{\mathbf X}
\newcommand{\wc}{\widetilde{c}}
\renewcommand{\mid}{\lvert}
\renewcommand{\c}{\mathbf c}
\newcommand{\x}{\mathbf x}
\renewcommand{\P}{\mathop{{}\mathbb{P}}}
\newcommand{\E}{\mathop{{}\mathbb{E}}}
\newcommand{\EE}[1]{\mathop{{}\mathbb{E}}\left[ #1 \right] }
\newcommand{\F}{\mathcal F}
\newcommand{\T}{\mathcal T}
\newcommand{\A}{\mathcal A}
\newcommand{\tpw}{\textsf{tpw}}
\newcommand{\Log}[1]{\log\left(#1\right)}
\newcommand{\comment}[1]{}
\renewcommand{\S}{\mathbf S}
\newcommand{\R}{\mathbb R}
\newcommand{\D}{\mathcal D}
\newcommand{\RH}{\widehat R}
\newcommand{\norm}[1]{\left\lVert#1\right\rVert}
\newcommand{\mra}[1]{#1}
\newcommand{\rui}[1]{#1}
\newcommand{\red}[1]{{\color{red}{#1}}}
\renewcommand{\red}[1]{#1}
\numberwithin{equation}{section}
\newcommand{\redd}[1]{{\color{red}{#1}}}
\renewcommand{\redd}[1]{#1}
\def\dfrac#1#2{
\hbox{\large$\textstyle\frac{#1}{#2}$}}
\theoremstyle{thmstyleone}%
\theoremstyle{thmstyletwo}%
\theoremstyle{thmstylethree}%
\tikzstyle{edgeBIG} = [gray, line cap=round, line join=round, line width=14pt]
\begin{document}

\title{\bf\Large Generalization bounds for learning under graph-dependence: A survey}

\author
{\normalsize 
Rui-Ray Zhang \\
\normalsize 
rui.zhang@bse.eu \\
\normalsize 
Barcelona School of Economics \\
\normalsize School of Mathematics,
Monash University
\and
\normalsize 
Massih-Reza Amini\\
\normalsize 
massih-reza.amini@imag.fr\\
\normalsize 
LIG/CNRS, University Grenoble Alpes}
\date{}

\maketitle

\begin{abstract}
Traditional statistical learning theory relies on the assumption that data are identically and independently distributed (i.i.d.).
\redd{However, this assumption often does not hold in many real-life applications.
In this survey, we explore learning scenarios where examples are dependent and their dependence relationship 
is described by a {\it dependency graph}, a commonly utilized model in probability and combinatorics.}
We collect various graph-dependent concentration bounds, which are then used to derive Rademacher complexity and stability generalization bounds for learning from graph-dependent data. We illustrate this paradigm through practical learning tasks 
and provide some research directions for future work. To our knowledge, this survey is the first of this kind on this subject.
\end{abstract}


\section{Introduction}

\mra{The central assumption in machine learning is that observations are independently and identically distributed (i.i.d.) with respect to a fixed yet unknown probability distribution.
Under this assumption, generalization error bounds, shedding light on the learnability of models or conducting in the design of advanced algorithms \citep{boser92}, have been proposed.} 
However, in many real applications, the data collected can be dependent, 
and therefore the i.i.d. assumption does not hold. 
There have been extensive discussions in the community on why and how the data are dependent \citep{dehling2002empirical,amini2015learning}.

\red{{\it Learning with interdependent data.}} Establishing generalization theories under dependent settings 
have received a surge of interest in recent years \citep{mohri2008stability,mohri2009rademacher,ralaivola2010chromatic,kuznetsov2017generalization}. A major line of research in this direction models the data dependencies by various types of mixing models, such as $\alpha$-mixing~\citep{rosenblatt1956central},
$\beta$-mixing~\citep{volkonskii1959some}, $\phi$-mixing~\citep{ibragimov1962some}, 
and $\eta$-mixing~\citep{kontorovich2007measure}, and so on.
Mixing models have been used in statistical learning theory to 
establish generalization error bounds
based on Rademacher complexity~\citep{mohri2009rademacher,mohri2010stability,kuznetsov2017generalization} or algorithmic stability~\citep{mohri2008stability,mohri2010stability,he2016stability}
via concentration results~\citep{kontorovich2008concentration} or independent block technique~\citep{yu1994rates}.
In these models, the mixing coefficients quantitatively measure the dependencies among data. \red{Another line of work, referred to as decoupling, studies the behavior of complex systems by decomposing a set of dependent random variables into sets of independent variables and a set of dependent variables with vanishing moments \citep{PenaGine99}. A random variable with vanishing moments has a property that its expected value converges to zero as the number of terms increases. This technique of decoupling has been successfully applied in many areas of mathematics, statistics, and engineering. }

\red{{\it Dependency graphs.}} Although the results based upon the mixing model \red{and decoupling with vanishing moments} are fruitful, they face difficulties in practical applications, as it is usually difficult to determine or estimate the quantitative dependencies among data points \red{(such as the mixing coefficients or the vanishing moments)} unless under some restrictive assumptions.
On the other hand, determining whether two data are dependent \mra{or exhibit a suitable dependency structure} is often much easier in practice. 
Thus in this paper, we focus on such a qualitative dependent setting. 
We use graphs as a natural tool to describe the dependencies among data and establish generalization
theory under such graph-dependence. The dependency graph model \red{we use} has been widely 
utilized in many other fields, 
in particular, in probability theory and statistics, where it is used to prove normal or Poisson approximation using Stein's approach, cumulants, and so on (see, for example, \citealt{janson1988normal, janson1990poisson}).
It is also heavily used in probabilistic combinatorics and statistical physics, 
such as Lov\'asz local lemma \citep{erdos1975problems}, 
Janson's inequality \citep{janson1988exponential}, along with many others.

{\it Rademacher complexity.}
We collect various concentration bounds under graph-dependence
and utilize them to derive 
Rademacher and stability generalization bounds for learning from dependent data. The basic tool used to establish generalization theory is concentration inequalities. 
\red{Standard concentration results for the i.i.d. case no longer apply for dependently distributed data, making the study a challenging task.}
\cite{janson2004large} extended Hoeffding's inequality to the sum of dependent random variables. 
This result bounds the probability that the summation of graph-dependent random variables deviates from its expected value, 
in terms of \red{the} fractional chromatic number of the dependency graph. 
Our first approach uses \red{a} similar idea, 
\mra{by dividing graph-dependent variables into sets of independent ones, 
we establish concentration bounds based on fractional colorings,
and generalization bounds via fractional Rademacher complexity.}

\red{{\it Algorithmic stability.}} PAC-Bayes bounds for classification with non-i.i.d.\ data have also been obtained based on fractional colorings of graphs in \cite{ralaivola2010chromatic}. 
These results also hold for specific learning settings such as ranking and learning from stationary $\beta$-mixing distributions.  \cite{ralaivola2015entropy} established new concentration inequalities for fractionally sub-additive and fractionally self-bounding functions of dependent variables. 
Though fundamental and elegant, the above generalization bounds are algorithm-independent. They consider the complexity of the hypothesis space and data distribution, but do not involve specific learning algorithms. To derive better generalization bounds, there is growing interest in developing algorithm-dependent generalization theories. This line of research heavily relies on the notion of algorithmic stability, which exhibits a key advantage, that is, they are tailored to specific learning algorithms, exploiting their particular properties.
Our second approach utilizes algorithmic stability to establish generalization bounds.
Note that even under the i.i.d. assumption, Hoeffding-type concentration inequalities, which bound the deviation of sample average from expectation, are not strong enough to prove stability-based generalization. On the contrary, McDiarmid's inequality characterizes the concentration of general Lipschitz functions of i.i.d.\ random variables, hence is used as the key tool for proving the stability bounds. Therefore, to build algorithmic stability theory for non-i.i.d. samples, 
we start with McDiarmid-type concentration bounds for graph-dependent random variables.

Table \ref{tab:resum} lists some generalization results
using Rademacher complexity and algorithmic stability for 
i.i.d., mixing, and graph-dependent settings, respectively.

\begin{table}[t!]
\begin{center}
\scalebox{.9}{
\begin{tabular}{ c|c|c }
\cline{2-3}
\multicolumn{1}{c}{} & Rademacher bounds & Stability bounds \\ \hline
i.i.d. 
& \cite{BartlettM02} & \cite{bousquet2002stability} 
\\
mixing conditions
& \cite{mohri2009rademacher} & \cite{mohri2008stability} \\
graph-dependence
& Theorem \ref{thm:GenBounds} (\citealt{amini2015learning})
& Theorem \ref{stabBounds} (\citealt{zhang2019mcdiarmid}) \\ \hline
\end{tabular}}
\end{center}
\caption{Rademacher complexity and stability generalization bounds for i.i.d., mixing, and graph-dependent setting.}
\label{tab:resum}
\end{table}




\red{{\it Paper organization.} \redd{In this survey, we 
begin with introducing different McDiarmid-type concentration inequalities for functions of graph-dependent random variables. 
Then we utilize these concentration bounds 
to provide upper bounds on generalization error for learning from graph-dependent data using Rademacher complexity and algorithm stability. 
}
In the reminder, Section \ref{sec:2} introduces notation and the framework. Section \ref{sec:3} establishes fractional Rademacher complexity and algorithmic stability bounds.
Section \ref{sec:4} shows how the presented framework can be utilized to derive generalization bounds 
for learning from graph-dependent data 
in a variety of practical scenarios, including learning-to-rank, multi-class classification problems, and learning from $m$-dependent data. We finally conclude this work in Section \ref{sec:5} and provide some perspective and future work.}
\section{Notation and framework}
\label{sec:2}
Throughout this paper, 
for all positive integer $n$, 
let $[n]$ denote the integer set
$\{1,2, \ldots, n\}$.
Given two integers $i < j$, let $[i, j]$ denote the integer set $\{i, i+1, \ldots, j-1, j\}$.
Let $\Omega_i$ be a Polish space for every $i\in [n]$,  
$\Om = \prod_{i\in[n]} \Omega_i = \Omega_1 \times \ldots \times \Omega_n $ be the product space, 
$\R$ be the set of real numbers, 
and $\R_+$ be the set of non-negative real numbers.
Let $\|\cdot\|_p$ denote the standard $\ell_p$-norm of a vector.
We use uppercase letters for random variables, lowercase letters for their realizations,
and bold letters for vectors.

\subsection{Graph-theoretic \redd{notation}}

We use the standard graph-theoretic notation. 
All graphs considered are finite, undirected, and simple
(no loops or multiple edges).
A graph $G = (V, E)$ consists of a set of vertices $V$, 
some of which are connected by edges in $E$.
Given a graph $G$, 
let $V(G)$ be the vertex set and $E(G)$ be the edge set.
The edge connecting a pair of \red{distinct} vertices $u, v$ is denoted by $\{ u, v \}$,
which is assumed to be unordered.
\red{The number of edges incident on a vertex is the degree of the vertex;
and we use $\Delta(G)$ to denote the maximum degree of graph $G$.}

\subsubsection{Graph covering and partitioning}
\label{sec:graphcovering}

Formally, given a graph $G$, we introduce the following definitions.

\begin{enumerate}[label=(a\arabic*)]
\item A family $\{ S_{k} \}_{k}$ of subsets of $V(G)$ is a \textit{vertex cover} of $G$ if $\bigcup S_{k} = V(G)$.

\item A vertex cover $\{ S_{k} \}_{k}$ of $G$ is a \textit{vertex partition} of $G$ 
if every vertex of $G$ is in exactly one element of $\{ S_{k} \}_{k}$.

\item A family $\{ ( S_{k}, w_{k} ) \}_{k}$ of pairs $( S_{k}, w_{k} )$, where $S_{k} \subseteq V(G)$
and $w_{k} \in [0, 1]$ is a \textit{fractional vertex cover} of $G$ 
if $\{ S_{k} \}_{k}$ is a vertex cover of $G$,
and $\sum_{k: v \in S_{k}} w_{k} = 1$ for every $v \in V(G)$.

\item \red{An independent set of $G$ is
a set of vertices of $G$, no two of which are adjacent in $G$.
Let $\mathcal I(G)$ denote the set of all independent sets of graph $G$.}

\item \label{fracCover} A fractional independent vertex cover $\{ ( I_{k}, w_{k} ) \}_{k}$ of $G$ is 
a fractional vertex cover
such that $I_{k} \in \mathcal I(G)$ for every $k$.


\item A fractional coloring of a graph $G$ is a mapping $g$ from $\mathcal I(G)$ to $[0, 1]$ 
such that $\sum_{I \in \mathcal I(G): v \in I} g(I) \ge 1$ for every vertex $v \in V(G)$. 
The fractional chromatic number $\chi_f(G)$ of $G$ is the minimum of the value 
$\sum_{I \in \mathcal I(G)} g(I)$ over fractional colorings of $G$.
See Figure \ref{fracColor} for an example.

Note that the fractional chromatic number $\chi_f(G)$ of graph $G$
is the minimum of $\sum_{k} w_{k}$ over all fractional independent vertex covers \mra{$\{ ( I_{k}, w_{k} ) \}_{k}$ of $G$ }
(see, for example, \citealt{janson2004large}).

\begin{figure}[htb]
\begin{center}
\scalebox{1.2}{
\begin{tikzpicture}
\centering
\tikzstyle{v}=[draw,circle]
\node [draw, semicircle, pattern=dots,
                inner sep=0pt, outer sep=0pt, minimum size=3mm,
                anchor=north, shape border rotate=180
                ] (i1) at (18:1.3cm) {};

\node [draw, semicircle, pattern=vertical lines, 
                inner sep=0pt, outer sep=0pt, minimum size=3mm,
                anchor=south,
                ] (i2) at (18:1.3cm) {};

\node [draw, semicircle, pattern=horizontal lines,
                inner sep=0pt, outer sep=0pt, minimum size=3mm,
                anchor=north, shape border rotate=180
                ] (j1) at (90:1.3cm) {};

\node [draw, semicircle, pattern=north west lines, 
                inner sep=0pt, outer sep=0pt, minimum size=3mm,
                anchor=south,
                ] (j2) at (90:1.3cm) {};

\node [draw, semicircle, pattern=vertical lines, 
                inner sep=0pt, outer sep=0pt, minimum size=3mm,
                anchor=north, , shape border rotate=180
                ] (k1) at (162:1.3cm) {};
                
\node [draw, semicircle, pattern=crosshatch, 
                inner sep=0pt, outer sep=0pt, minimum size=3mm,
                anchor=south,
                ] (k2) at (162:1.3cm) {};  

\node [draw, semicircle, pattern=dots,
                inner sep=0pt, outer sep=0pt, minimum size=3mm,
                anchor=north, shape border rotate=180
                ] (l1) at (234:1.3cm) {};
                
\node [draw, semicircle, pattern=north west lines, 
                inner sep=0pt, outer sep=0pt, minimum size=3mm,
                anchor=south,
                ] (l2) at (234:1.3cm) {};
                
\node [draw, semicircle, pattern=horizontal lines,
                inner sep=0pt, outer sep=0pt, minimum size=3mm,
                anchor=north, shape border rotate=180
                ] (m1) at (306:1.3cm) {};
                
\node [draw, semicircle, pattern=crosshatch, 
                inner sep=0pt, outer sep=0pt, minimum size=3mm,
                anchor=south,
                ] (m2) at (306:1.3cm) {};         
\draw
(i1) edge (m2)
;
\draw
(i2) edge (j1)
(j1) edge (k2)
(k1) edge (l2)
(l1) edge (m1);
\end{tikzpicture}
}
\end{center}
\medskip
\caption{A fractional coloring of a cycle graph $C_{5}$ of length $5$
with patterns indicating different colors.
The set of pairs 
$ \{ ( \{ i, (i+3)(\mathrm{mod}\ 5) \}, 1/2 ) \}_{1 \le i \le 5}$ is a fractional vertex cover
with the fractional chromatic number $5/2$. }
\label{fracColor}
\end{figure}

\item 
Let $H$ be a graph 
and $\{ H_x \subseteq V (G) \}_{ x \in V (H) }$ be a set of subsets of $V(G)$ indexed by the vertices of $H$. 
Each set $H_x$ is called a `bag'. The pair $(H, \{ H_x \}_{x \in V (H)})$ is an $H$-partition of $G$ if:

\begin{enumerate}[leftmargin=1cm, label=(\roman*)]
\item $\{ H_x \}_{x \in V (H)}$ is a vertex partition of $G$.
\item Distinct $u$ and $v$ are adjacent in $H$ 
if and only if there is an edge of $G$ with one endpoint in $H_u$ and the other endpoint in $H_v$.
\end{enumerate}

In graph theory, a \textit{vertex identification} (also called vertex contraction) 
is to contract a pair of vertices $u$ and $v$ of a graph 
and produces a graph in which the two vertices $u$ and $v$ are replaced with a single vertex $t$ 
such that $t$ is adjacent to the union of the vertices to which $u$ and $v$ were originally adjacent. 
Note that in vertex contraction, it does not matter if \redd{$u$ and $v$} are connected by an edge; 
if they are, the edge is simply removed upon contraction, 
this special case of vertex identification called \textit{edge contraction}.

Informally speaking, an $H$-partition of graph $G$ 
is obtained from a proper partition of $V(G)$ 
by identifying the vertices in each part, deleting loops, and replacing parallel edges with a single edge. 
$H$ is also called the \textit{quotient graph} of the graph $G$.
For brevity, we say $H$ is a partition of $G$.
For more about partitions of graphs, see, for example, \citealt{wood2009tree}.


\item A tree is a connected, acyclic graph, and a forest is a disjoint union of trees.
For a given forest $F$, 
we denote the set of (vertex sets of) disjoint trees in forest $F$ as $\T(F)$.

\item If forest $F$ is a partition of graph $G$, 
then the pair $(F, \{ F_x \subseteq V(G) \}_{x \in V (F)})$ is a \textit{tree-partition} of $G$.
The set of all tree-partitions of graph $G$ is denoted by $\textsf{TP}(G)$.
See Figure \ref{fig:tp} for an example.

Tree-partitions were
independently introduced by \cite{seese1985tree} and \cite{halin1991tree}, 
and have since been widely investigated \citep{wood2009tree}.
Essentially, 
a tree-partition of a graph is a proper partition of its vertex set into `bags', 
such that identifying the vertices in each bag produces a forest.

\begin{figure}[htb]
\centering
\scalebox{.9}{
\begin{tikzpicture}[scale=.6]
\node[circle,draw=black,fill=white] (n6) at (1,6) {$1$};
\node[circle,draw=black,fill=white] (n4) at (3,8) {$2$};
\node[circle,draw=black,fill=white] (n5) at (7,7.5)  {$4$};
\node[circle,draw=black,fill=white] (n1) at (10,8) {$6$};
\node[circle,draw=black,fill=white] (n2) at (9,6)  {$5$};
\node[circle,draw=black,fill=white] (n3) at (4,5)  {$3$};
\foreach \from/\to in {n6/n4,n4/n5,n5/n1,n1/n2,n2/n5,n2/n3,n3/n4} \draw (\from) -- (\to);

%

\node (0) at (6, 4) {$G$};

\begin{pgfonlayer}{background}
\draw[gray, line cap=round, line join=round, line width=25pt] 
(1,6) edge (3,8);
\draw[gray, line cap=round, line join=round, line width=25pt] 
(4,5) edge (7,7.5);
\draw[gray, line cap=round, line join=round, line width=25pt] 
(9,6) edge (10,8);
\end{pgfonlayer}

\node (1) at (17, 4) {$H$};
\node[circle,draw=black] (A) at (13, 6.5) {$h_1$};
\node[circle,draw=black] (C) at (16, 6.5) {$h_2$};
\node[circle,draw=black] (E) at (19, 6.5) {$h_3$};
\draw[-]  
(A) edge (C) (C) edge (E);
\end{tikzpicture}
}
\caption{A tree-partition of graph $G$ 
is $(H, \{ \{ 1, 2 \}, \{ 3, 4 \}, \{ 5, 6 \} \} )$,
where $H$ is a path on vertices $\{ h_1, h_{2}, h_{3} \}$, 
which correspond to
vertex sets $\{ 1, 2 \}, \{ 3, 4 \}$, and  $\{ 5, 6 \}$ respectively.}
\label{fig:tp}
\end{figure}

\end{enumerate}

\subsection{Probabilistic tools}

Concentration inequalities are fundamental tools in statistical learning theory.
They bound the deviation of a function of random variables from some value that
is usually the expectation.
Among the most powerful ones is McDiarmid's inequality \citep{mcdiarmid1989method}, 
which establishes sharp concentration
for multivariate functions that do not depend too much on any individual coordinate,
specifically, 
when the function satisfies $\c$-Lipschitz condition 
for a weighted hamming distance
(bounded differences condition).

Let $\I{A}$ denote the indicator function for any event $A$,
that is, $\I{A}=1$ if $A$ occurs, otherwise, $\I{A} = 0$.
We first introduce the definition of a Lipschitz function.

\begin{df}[$\c$-Lipschitz]
Given a vector $\c= (c_1, \ldots, c_n) \in \R_+^n$, 
a function $f:\Om \rightarrow \R$ is $\c$-Lipschitz 
if for all $\x = (x_1, \ldots, x_n)$ and $\x' = (x'_1, \ldots, x'_n)\in \Om$, we have
\begin{align}
\mid f(\x) - f(\x') \mid
\le \sum_{i = 1}^n c_i \I{ x_i \ne x'_i },
\label{lip}
\end{align}
where $c_i$ is the $i$-th Lipschitz coefficient of $f$
(with respect to the Hamming metric).
\end{df}

McDiarmid's inequality is based on the following bound on the moment-generating function.

\begin{lm}[\citealt{mcdiarmid1989method}]
\label{mcdiarmidlemma}
Let $\X = (X_1, \ldots, X_n)$ be a vector of independent random variables taking values in $\Om$ 
and $f:\Om\rightarrow\mathbb{R}$ be $\c$-Lipschitz.
Then for any $s>0$,
\begin{equation*}
\E\left[ e^{ s( f(\X) - \E{ f(\X)) } } \right]
\le \Exp{ \dfrac{s^{2}}{8 } \| \c \| ^2_2 }.
\end{equation*}
\end{lm}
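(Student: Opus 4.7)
The plan is to prove the bound by the classical Doob martingale difference / Hoeffding lemma argument, tailored to the conditional MGF formulation that the statement requests.

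First I would set up the standard telescoping decomposition. Define
\[
V_i := \E\bigl[ f(\X) \mid X_1, \ldots, X_i \bigr] - \E\bigl[ f(\X) \mid X_1, \ldots, X_{i-1} \bigr],
\qquad i = 1, \ldots, n,
\]
with the convention $\E[f(\X) \mid X_1, \ldots, X_0] = \E[f(\X)]$. Then $f(\X) - \E[f(\X)] = \sum_{i=1}^n V_i$, and $(V_i)$ is a martingale difference sequence with respect to the filtration $\F_i = \sigma(X_1, \ldots, X_i)$, so in particular $\E[V_i \mid \F_{i-1}] = 0$.

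The main step is to show that, conditionally on $\F_{i-1}$, the random variable $V_i$ is almost surely confined to an interval of length at most $c_i$. Here I would use independence crucially: setting $g_i(x_1, \ldots, x_i) := \E[ f(x_1, \ldots, x_i, X_{i+1}, \ldots, X_n) ]$, independence gives $\E[f(\X) \mid \F_i] = g_i(X_1, \ldots, X_i)$, while $\E[f(\X) \mid \F_{i-1}] = \E_{X_i}[g_i(X_1, \ldots, X_{i-1}, X_i)]$. The $\c$-Lipschitz condition \eqref{lip} applied under the expectation over $X_{i+1}, \ldots, X_n$ shows
\[
\bigl| g_i(X_1, \ldots, X_{i-1}, x) - g_i(X_1, \ldots, X_{i-1}, x') \bigr| \le c_i
\]
for all $x, x'$. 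Hence $V_i \in [A_i, B_i]$ almost surely given $\F_{i-1}$, where $B_i - A_i \le c_i$ and both endpoints are $\F_{i-1}$-measurable.

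With these two facts in hand, I would apply Hoeffding's lemma conditionally: for any $\F_{i-1}$-measurable bounded interval $[A_i, B_i]$ containing a mean-zero variable $V_i$,
\[
\E\bigl[ \Exp{ s V_i } \bigm| \F_{i-1} \bigr] \le \Exp{ s^2 (B_i - A_i)^2 / 8 } \le \Exp{ s^2 c_i^2 / 8 }.
\]
Finally, by the tower property and peeling off the innermost conditional expectation one $V_i$ at a time,
\[
\E\bigl[ \Exp{ s(f(\X) - \E{f(\X)}) } \bigr]
= \E\Bigl[ \Exp{ s \sum_{i=1}^{n-1} V_i } \E\bigl[ \Exp{ s V_n } \bigm| \F_{n-1} \bigr] \Bigr]
\le \Exp{ s^2 c_n^2/8 } \cdot \E\Bigl[ \Exp{ s \sum_{i=1}^{n-1} V_i } \Bigr],
\]
and iterating yields the desired bound $\Exp{ s^2 \|\c\|_2^2 / 8 }$. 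The main obstacle is the second step, namely justifying the conditional range bound on $V_i$; once independence reduces it to a deterministic Lipschitz estimate on $g_i$, the remainder is a routine application of Hoeffding's lemma together with the tower property.
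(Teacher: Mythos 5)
Your proof is correct, and it is precisely the standard Doob-martingale/Hoeffding-lemma argument from \cite{mcdiarmid1989method}, which is the source this survey cites for Lemma \ref{mcdiarmidlemma} without reproducing the proof. All the key points are in place: independence is used exactly where it is needed (to identify $\E[f(\X)\mid \F_i]$ with $g_i(X_1,\ldots,X_i)$ and to reduce the conditional range of $V_i$ to a deterministic Lipschitz estimate on $g_i$), the endpoints $A_i, B_i$ are correctly noted to be $\F_{i-1}$-measurable so that the conditional Hoeffding lemma applies, and the tower-property peeling yields the product bound $\Exp{s^2\|\c\|_2^2/8}$.
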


\red{We can now state the following McDiarmid's inequality, 
which constitutes one of the pillars of our results.
It states that a Lipschitz function of independent random variables concentrates around its expectation.}
\begin{tm}[McDiarmid's inequality \citealt{mcdiarmid1989method}]
Let $f: \Om \rightarrow \R$ be $\c$-Lipschitz 
and $\X = (X_1, \ldots, X_n)$ be a vector of independent random variables 
that takes values in $\Om$.
Then for every $t>0$,
\begin{align}
\P( f(\X) - \E { f(\X) } \ge t )
\le \exp \left( - \frac{2t^2}{ \| \c \| ^2_2} \right).
\label{mc-ie}
\end{align}
\label{McDiarmid's inequality}
\end{tm}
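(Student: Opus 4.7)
The plan is to derive the theorem as an essentially immediate consequence of the moment generating function estimate already provided by Lemma \ref{mcdiarmidlemma}, via the standard Cram\'er--Chernoff (exponential Markov) argument. Since the hard analytic work—controlling the MGF of $f(\X) - \E f(\X)$ by a sub-Gaussian factor $\exp(s^2\|\c\|_2^2/8)$—has been isolated into that lemma, the remaining task is purely to convert an MGF bound into a tail bound and optimize.

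First, fix $s > 0$ and rewrite the event $\{ f(\X) - \E f(\X) \ge t \}$ as $\{ \exp(s(f(\X) - \E f(\X))) \ge \exp(st) \}$. Applying Markov's inequality gives
\begin{equation*}
\P\bigl( f(\X) - \E f(\X) \ge t \bigr)
\le \exp(-st)\, \E \Exp{ s\bigl( f(\X) - \E f(\X) \bigr) }.
\end{equation*}
Next, plug in Lemma \ref{mcdiarmidlemma} to upper bound the expectation by $\exp(s^2 \|\c\|_2^2 / 8)$, obtaining
\begin{equation*}
\P\bigl( f(\X) - \E f(\X) \ge t \bigr)
\le \Exp{ \tfrac{s^2 \|\c\|_2^2}{8} - s t }.
\end{equation*}

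Finally, I would optimize the right-hand side over $s > 0$. The exponent is a convex quadratic in $s$, minimized at $s^\star = 4t / \|\c\|_2^2 > 0$, where its value equals $-2t^2 / \|\c\|_2^2$. Substituting $s^\star$ yields the claimed bound \eqref{mc-ie}.

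Given that Lemma \ref{mcdiarmidlemma} is already stated and available, there is no genuine obstacle in this proof; the only delicate point is remembering that the Chernoff approach requires $s > 0$ (so one really is deriving an upper tail bound, which is exactly what the theorem asserts), and that choosing $s^\star$ is valid because it is strictly positive for any $t > 0$. The main conceptual difficulty of McDiarmid's inequality actually lives inside Lemma \ref{mcdiarmidlemma}, which is typically proved by the Doob martingale decomposition $f(\X) - \E f(\X) = \sum_i D_i$ with $D_i = \E[f(\X)\mid\F_i] - \E[f(\X)\mid\F_{i-1}]$, combined with Hoeffding's lemma applied conditionally to each bounded martingale difference $|D_i|\le c_i$ and the tower property—but that derivation is not needed here.
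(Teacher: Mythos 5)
Your proposal is correct and is exactly the route the paper intends: the statement is presented as a consequence of Lemma \ref{mcdiarmidlemma}, and your Cram\'er--Chernoff step with the optimal choice $s^\star = 4t/\|\c\|_2^2$ (giving exponent $-2t^2/\|\c\|_2^2$) is the same computation the paper carries out in its proof of the generalized Theorem \ref{thm:FracConBounds}. No gaps.
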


\mra{In the following, we} extend McDiarmid's inequality to the graph-dependent case,
where the dependencies among random variables are characterized by a dependency graph.
We first define the notion of dependency graphs, which is a widely used model in probability, statistics, and combinatorics, see \cite{erdos1975problems,janson1988exponential,chen1978two,baldi1989normal} for 
some classical results.

Given a graph $G = (V, E)$,
we say that random variables $\{ X_i \}_{i \in V}$ \red{are} $G$-\textit{dependent}
if for any disjoint $S, T \subset V$ such that $S$ and $T$ are non-adjacent in $G$
(that is, no edge in $E$ has one endpoint in $S$ and the other in $T$),
random variables $\{ X_i \}_{i \in S}$ and $\{ X_j \}_{j \in T}$ are independent.
See Figure \ref{fig:dep} for an example.
Formally, we define the dependency graphs in the following.

\begin{df}[Dependency graphs]
An undirected graph $G$ is called a dependency graph of a random vector $\X=(X_1,\ldots,X_n)$ if 
\begin{enumerate}[label=(b\arabic*)]
\item $V(G)=[n]$.
\item For all disjoint $I, J \subset [n]$, if $I, J$ are not adjacent in $G$, then 
$\{X_i\}_{i \in I}$ and $\{X_j\}_{j \in J}$ are independent.
\end{enumerate}
\end{df}

\begin{figure}[tbh]
\begin{center}
\begin{tikzpicture}
[scale=.6,auto=left,every node/.style={circle,draw=black,fill=white}]
\node[circle,draw=black] (n6) at (1,6) {$1$};
\node[circle,draw=black] (n4) at (3,8) {$2$};
\node[circle,draw=black] (n5) at (7,7.5)  {$4$};
\node[circle,draw=black] (n1) at (10,8) {$6$};
\node[circle,draw=black] (n2) at (9,6)  {$5$};
\node[circle,draw=black] (n3) at (4,5)  {$3$};
\foreach \from/\to in {n6/n4,n4/n5,n5/n1,n1/n2,n2/n5,n2/n3,n3/n4} \draw (\from) -- (\to);
\end{tikzpicture}
\caption{A dependency graph $G$ for random variables $\{ X_i \}_{i \in [6]}$.
Random variables $\{  X_1, X_2 \}$ and $\{ X_5, X_6 \}$ are independent,
since disjoint vertex sets $\{ 1, 2 \}$ and $\{ 5, 6 \}$ are not adjacent in $G$.}
\label{fig:dep}
\end{center}
\end{figure}

The above \red{definition of dependency graphs} is a strong version; there are ones 
with weaker assumptions, such as the one used in Lov\'asz local lemma.
Let $K_n$ denote the complete graph on $[n]$,
that is, every two vertices are adjacent.
Then $K_n$ is a dependency graph 
for any set of variables $\{ X_i \}_{i \in [n]}$.
Note that the dependency graph for a set of random variables 
may not be necessarily unique,
and the sparser ones are the more interesting ones.

Here we introduce a widely-studied random process that generates dependent data 
whose dependency graph can be naturally constructed for illustration purposes.
Consider a data-generating procedure modeled by the spatial Poisson point process, 
which is a Poisson point process on $\R^2$,
see \cite{linderman2014discovering,kirichenko2015optimality} for discussions of using this process to 
\red{model data collections} in various machine learning applications.
The number of points in each finite region follows a Poisson distribution, 
and the number of points in disjoint regions are independent.
Given a finite set $\{ U_i \}_{i=1}^n$ of regions in $\R^2$, 
let $X_i$ be the number of points in region $U_i$ for every $i \in [n]$. 
Then the graph $G\left( [n], \{ \{ i, j \}: U_i \cap U_j \ne \emptyset \} \right)$ 
is a dependency graph of the random variables $\{ X_i \}_{i=1}^n$.


\red{
An important property of the dependency graph, 
in view of the definition of
fractional independent vertex covers,
is that if we have a fractional independent vertex cover $\{ ( I_{k}, w_{k} ) \}_{k \in [K]}$ of $G$, 
then we may decompose the sum of interdependent variables 
into a weighted sum of sums of independent variables.

\begin{lm}{\cite[Lemma 3.1]{janson2004large}}
Let $G$ be a graph,
and $\{ ( I_{k}, w_{k} ) \}_{k \in [K]}$
be a fractional independent vertex cover of $G$.
Let $\{ u_{i} \}_{i \in V(G)}$ be a set of any numbers.
Then
\begin{equation}
\label{decomposition}
\sum_{i \in V(G)} u_i
= \sum_{i \in V(G)} \sum_{k=1}^K w_k \I{ i \in I_k } u_i
= \sum_{k=1}^K w_k \sum_{i\in I_k} u_i,
\end{equation}
where each $I_{k} \in \mathcal I(G)$ is an independent set.
In particular,
we have the following.
\begin{itemize}
\item By setting $u_{i} = 1$ for each $i \in V(G)$, we have
\begin{align}
\mid V(G) \mid\,= \sum_{k=1}^K w_k \mid I_k \mid.
\label{decom-size}
\end{align}
\item By letting $\{ u_{i} \}_{i \in V(G)}$ 
be some $G$-dependent variables $\{ X_{i} \}_{i \in V(G)}$,
we have \eqref{decomposition} becomes 
a weighted sum of independent random variables
$\{ X_i \}_{i\in I_k}$.
\end{itemize}
\end{lm}
}

\subsection{Concentration bounds for decomposable functions}

Notice that McDiarmid's inequality applies to independent random variables. 
\cite{janson2004large} derived a Hoeffding-like inequality for graph-dependent random variables
by decomposing the sum into sums of independent variables.
Janson's bound is a special case of McDiarmid-type inequality tailored for interdependent random variables, especially when the function involves summation.

\begin{tm}[Janson's concentration inequality,~\citealt{janson2004large}]
Let random vector $\X$ be $G$-dependent such that for every $i \in V(G)$, 
random variable $X_i$ takes values in a real interval of length $c_i \ge 0$.
Then, for every $t > 0$, 
\begin{align}
\P\left( \sum_{i \in V(G)} X_i - \E { \sum_{i \in V(G)} X_i } \ge t \right)
\le \Exp{ - \frac{2t^2}{ \chi_f(G) \| \c \| ^2_2} },
\label{eqn:janson}
\end{align}
where $\c= (c_i )_{i \in V(G)}$ and $\chi_f(G)$ is 
the fractional chromatic number of $G$.
\label{Janson inequality}
\end{tm}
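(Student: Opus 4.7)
The plan is to combine the fractional decomposition in \eqref{decomposition} with the Chernoff--Cram\'er method, turning the $G$-dependent sum into a weighted geometric mean of moment generating functions of \emph{independent} sums via H\"older's inequality, and then applying the i.i.d.\ bound from Lemma \ref{mcdiarmidlemma} (equivalently, Hoeffding's lemma) on each piece.

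Concretely, I would fix an optimal fractional independent vertex cover $\{(I_k,w_k)\}_{k\in[K]}$ of $G$ achieving $\sum_k w_k=\chi_f(G)$, center the variables by setting $Y_i=X_i-\E X_i$, and write
\begin{equation*}
\sum_{i\in V(G)} Y_i \;=\; \sum_{k=1}^K w_k \sum_{i\in I_k} Y_i \;=\; \sum_{k=1}^K w_k\, Z_k, \qquad Z_k:=\sum_{i\in I_k} Y_i.
\end{equation*}
For any $s>0$, Markov's inequality gives $\P(\sum_i Y_i\ge t)\le e^{-st}\,\E[\exp(s\sum_k w_k Z_k)]$. Since the weights $w_k/\chi_f(G)$ sum to one, I would apply the generalized H\"older inequality with conjugate exponents $p_k=\chi_f(G)/w_k$ to obtain
\begin{equation*}
\E\Exp{ s\sum_{k=1}^K w_k Z_k } \;\le\; \prod_{k=1}^K \E\Exp{ s\,\chi_f(G)\, Z_k }^{w_k/\chi_f(G)}.
\end{equation*}

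Now the crucial point: each $I_k$ is an independent set of the dependency graph $G$, so the variables $(X_i)_{i\in I_k}$ are mutually independent, and each $Y_i$ is bounded in an interval of length $c_i$. Applying Lemma \ref{mcdiarmidlemma} (or equivalently Hoeffding's lemma term-by-term) to the sum $Z_k$ of independent centered variables yields $\E[\exp(s\chi_f(G) Z_k)]\le \exp(s^2\chi_f(G)^2\sum_{i\in I_k}c_i^2/8)$. Substituting and exchanging the order of summation,
\begin{equation*}
\E\Exp{ s\sum_i Y_i } \;\le\; \Exp{ \frac{s^2\chi_f(G)}{8}\sum_{k=1}^K w_k \sum_{i\in I_k} c_i^2 } \;=\; \Exp{ \frac{s^2\chi_f(G)\,\|\c\|_2^2}{8} },
\end{equation*}
where the last equality uses the defining property $\sum_{k:i\in I_k} w_k=1$ of a fractional vertex cover to collapse the double sum into $\sum_i c_i^2=\|\c\|_2^2$. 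Finally, optimizing $s=4t/(\chi_f(G)\|\c\|_2^2)$ in the Chernoff bound produces the claimed exponent $-2t^2/(\chi_f(G)\|\c\|_2^2)$.

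The main obstacle is conceptual rather than computational: one has to see that H\"older's inequality with the exponents $\chi_f(G)/w_k$ is exactly the right device for turning a \emph{weighted} sum of MGFs (whose joint law is still $G$-dependent across different $k$) into a product, after which the fractional-cover normalization $\sum_{k:i\in I_k}w_k=1$ makes the weights disappear and leaves the clean constant $\chi_f(G)$ in front of $\|\c\|_2^2$. Everything else (Hoeffding's lemma inside each $I_k$, and the Chernoff optimization in $s$) is then a routine i.i.d.\ computation.
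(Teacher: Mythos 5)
Your proof is correct, and it takes a genuinely different (though closely related) route from the one the paper relies on. The paper does not prove Theorem \ref{Janson inequality} directly: it cites \cite{janson2004large} and instead proves the more general Theorem \ref{thm:FracConBounds} (of which Janson's bound is the special case $f(\x)=\sum_i x_i$) by splitting the moment generating function \emph{additively}: Jensen's inequality with auxiliary convex weights $p_j$ turns $\E\exp(s\sum_j w_j(\cdots))$ into $\sum_j p_j\,\E\exp(\tfrac{sw_j}{p_j}(\cdots))$, each summand is bounded by Lemma \ref{mcdiarmidlemma}, the $p_j$ are then optimized to equalize the exponents, and a final Cauchy--Schwarz step produces the factor $\chi_f(G)\|\c\|_2^2$. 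You instead split \emph{multiplicatively}: the generalized H\"older inequality with exponents $p_k=\chi_f(G)/w_k$ (which are indeed conjugate since $\sum_k w_k/\chi_f(G)=1$, and satisfy $p_k\ge 1$ because $w_k\le 1\le\chi_f(G)$) turns the expectation of the product into a weighted geometric mean of the MGFs $\E\exp(s\chi_f(G)Z_k)$ of the independent sums, after which the fractional-cover identity $\sum_{k:i\in I_k}w_k=1$ collapses the double sum to $\|\c\|_2^2$ with no optimization of auxiliary weights and no Cauchy--Schwarz. What each buys: your H\"older route is shorter and lands exactly on $\chi_f(G)\|\c\|_2^2$ in one step; the paper's Jensen route yields the slightly sharper intermediate quantity $\bigl(\sum_j w_j(\sum_{i\in I_j}c_i^2)^{1/2}\bigr)^2$ before relaxing it, and is stated at the level of general decomposable Lipschitz functions (though your argument would extend to that setting just as well, replacing Hoeffding's lemma on each $Z_k$ by Lemma \ref{mcdiarmidlemma} on each $f_j$). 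Your Chernoff optimization $s=4t/(\chi_f(G)\|\c\|_2^2)$ and the resulting exponent are correct.
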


\red{
We will extend this result,
and obtain similar concentration results
under certain decomposability constraints 
for Lipschitz functions of graph-dependent random variables defined in Definition \ref{def:decLip}.}

\red{
\begin{df}[Decomposable $\c$-Lipschitz functions]
\label{def:decLip}
Given a graph $G$ on $n$ vertices and a vector $\c= (c_i)_{i \in [n]} \in \R_+^n$,
a function $f:\Om \rightarrow \R$ is {\it decomposable $\c$-Lipschitz}
with respect to graph $G$ if for all $\x = (x_1, \ldots, x_n) \in \Om$
and for all fractional independent vertex covers $\{ ( I_{j}, w_{j} ) \}_{j}$ of $G$,
there exist $(c_{i})_{i \in I_{j}}$-Lipschitz functions $\{ f_{j}: \Om_{I_{j}} \rightarrow \R \}_{j}$ such that
\begin{align}
f( \x ) = \sum_{ j } w_j f_{j}( \x_{I_j} ),
\label{decom}
\end{align}
where
for every set $V \subseteq [n]$,
we write
$\Om_V := \prod_{i \in V} \Omega_i$,
and $\x_V := \{ X_i \}_{i \in V}$.
\end{df}
}

\begin{tm}[\citealt{usunier2005generalization, amini2015learning}]
\label{thm:FracConBounds}

Let function $f: \Om\rightarrow \R$ be decomposable $\c$-Lipschitz,
and $\Om$-valued random vector $\X$ be $G$-dependent.
Then for $t > 0$,
\begin{align}
\P\left( f(\X) - \E { f(\X) } \ge t \right)
\le \Exp{ -\frac{2t^2}{\chi_f(G) \| \c \| ^2_2 } }.
\label{colorB}
\end{align}
\end{tm}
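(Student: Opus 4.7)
The plan is to prove this via a Chernoff-style exponential-moment argument that exploits the decomposition of $f$ into pieces depending only on independent coordinates. Specifically, I would fix a fractional independent vertex cover $\{(I_k, w_k)\}_{k \in [K]}$ of $G$ achieving $\sum_k w_k = \chi_f(G)$, and use the hypothesis to write $f(\X) = \sum_k w_k f_k(\X_{I_k})$ with $f_k$ being $(c_i)_{i \in I_k}$-Lipschitz. The crucial observation is that for each $k$, since $I_k$ is an independent set in the dependency graph $G$, the variables $(X_i)_{i \in I_k}$ are mutually independent, which means Lemma \ref{mcdiarmidlemma} applies cleanly to each centered piece $Z_k := f_k(\X_{I_k}) - \E{f_k(\X_{I_k})}$.

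After the standard Markov step $\P(f(\X) - \E{f(\X)} \ge t) \le e^{-st} \E{\exp(s \sum_k w_k Z_k)}$, the main technical move is to absorb the weights $w_k$ into the exponents so that the $Z_k$'s can be decoupled even though they share variables between overlapping independent sets. The cleanest way to do this is generalized H\"older's inequality with weights $\lambda_k = w_k/\chi_f(G)$ that sum to $1$: rewriting $s \sum_k w_k Z_k = \sum_k \lambda_k \cdot s \chi_f(G) Z_k$, one obtains
\begin{equation*}
\E{\Exp{ s \sum_k w_k Z_k }}
\le \prod_k \E{ \Exp{ s\chi_f(G) Z_k } }^{w_k/\chi_f(G)}.
\end{equation*}
Now Lemma \ref{mcdiarmidlemma} bounds each factor by $\Exp{ s^2 \chi_f(G)^2 \sum_{i \in I_k} c_i^2 / 8 }$.

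The last step is to verify the arithmetic collapses to the desired constant. Taking logs and using the defining property of a fractional independent vertex cover, $\sum_{k: i \in I_k} w_k = 1$ for every $i \in [n]$, the exponent becomes
\begin{equation*}
\frac{s^2 \chi_f(G)^2}{8} \sum_k \frac{w_k}{\chi_f(G)} \sum_{i \in I_k} c_i^2
= \frac{s^2 \chi_f(G)}{8} \sum_i c_i^2 \sum_{k: i \in I_k} w_k
= \frac{s^2 \chi_f(G) \| \c \|_2^2}{8},
\end{equation*}
so $\E{\exp(s(f(\X) - \E{f(\X)}))} \le \Exp{ s^2 \chi_f(G) \| \c \|_2^2 / 8 }$. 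Optimizing over $s > 0$ yields the stated bound \eqref{colorB}.

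The main obstacle I anticipate is choosing the decoupling inequality with the right scaling. A naive application of Jensen's inequality (convexity of $\exp$) applied to the convex combination with weights $w_k/\chi_f(G)$ also produces an MGF bound, but with $\chi_f(G)^2$ rather than $\chi_f(G)$ in the denominator of the final exponent, losing a factor of $\chi_f(G)$. The generalized H\"older route is what makes the per-coordinate identity $\sum_{k: i \in I_k} w_k = 1$ combine with $\sum_k w_k = \chi_f(G)$ to produce exactly one power of $\chi_f(G)$ in the numerator, which is what matches the Janson bound of Theorem \ref{Janson inequality} in the summation special case.
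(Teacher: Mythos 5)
Your proof is correct, and it shares the paper's overall architecture: the Cram\'er--Chernoff step, the decomposition of $f$ along a fractional independent vertex cover attaining $\chi_f(G)$, and the per-block sub-Gaussian bound of Lemma \ref{mcdiarmidlemma} applied to each $f_k(\X_{I_k})$ (valid because each $I_k$ is an independent set of the dependency graph). Where you genuinely diverge is the decoupling of the overlapping blocks. The paper writes the exponent as a convex combination with an auxiliary probability vector $(p_j)_j$, applies Jensen's inequality, chooses $p_j \propto w_j\bigl(\sum_{i\in I_j}c_i^2\bigr)^{1/2}$ so that all resulting exponentials coincide, and then finishes with Cauchy--Schwarz to show $\bigl(\sum_j w_j\sqrt{\sum_{i\in I_j}c_i^2}\bigr)^2 \le \chi_f(G)\|\c\|_2^2$. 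You instead invoke the generalized H\"older inequality with exponents $\chi_f(G)/w_k$, which lets the two identities $\sum_k w_k=\chi_f(G)$ and $\sum_{k:\,i\in I_k}w_k=1$ collapse the exponent to $s^2\chi_f(G)\|\c\|_2^2/8$ in one line; this is arguably cleaner and arrives at the identical MGF bound, hence the identical tail bound after optimizing $s$. One correction to your closing remark: Jensen does not intrinsically lose a factor of $\chi_f(G)$ --- it does so only with the naive weights $w_k/\chi_f(G)$; with the optimized $p_j$ the paper's Jensen-plus-Cauchy--Schwarz route recovers exactly the same constant as your H\"older argument. (Trivial housekeeping: blocks with $w_k=0$ should be discarded before applying H\"older so that the exponents $\chi_f(G)/w_k$ are well defined.)
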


\begin{re}
\label{chromRE}
The chromatic number $\chi(G)$ of a graph $G$ is 
the smallest number of colors needed to color the vertices of $G$
such that no two adjacent vertices share the same color.
Let $\Delta(G)$ denote the maximum degree of $G$.
It is well-known that $\chi_f(G) \le \chi(G) \le \Delta(G)+1$, 
(see, for example, \cite{bollobas1998modern}). 
Thus in our bound \eqref{colorB}, 
we can substitute $\chi_f(G)$ with $\chi(G)$ or $\Delta(G)+1$,
which may be easier to estimate in practice.
\end{re}

\begin{proof}[Proof of Theorem \ref{thm:FracConBounds}]
Following 
the Cram\'er-Chernoff method
(see, for example, \citealt{boucheron2013concentration}),
we have for any $s>0$ and $t>0$,
\begin{align}
\P\left( f(\X) - \E f(\X) \ge t \right) 
\le e^{-st} \E \left[ e^{  s ( f(\X) - \E f(\X) ) } \right].
\label{mark}
\end{align}

Let $\{ ( I_{j}, w_{j} ) \}_{j \in [J]}$ be a fractional independent vertex cover of the dependency graph $G$ with 
\begin{align}
\sum_{j=1}^{J} w_j = \chi_f(G).
\label{sumw}
\end{align}
\red{Utilizing the decomposition property of the Lipschitz function \eqref{decom},}
the moment-generating function on the right-hand side of \eqref{mark} can be written as
\begin{align*}
\E \left[ e^{  s ( f(\X) - \E f(\X) ) } \right]
= \EE{ \Exp{  \sum_{j=1}^J  s w_j (  f_j(I_j) - \E f_j(I_j) ) }},
\end{align*}
\redd{where each $f_j(I_j) 
= f_j( \X_{I_j} ) $ is some Lipschitz function of independent variables
$\{ X_i \}_{I \in I_j}$.}

\mra{Now, let $\{p_1, \ldots, p_J\}$ be any set of $J$ strictly positive reals that sum to 1. 
Since $\sum_{j=1}^J \omega_j/\chi_f(G)=1$ \red{by \eqref{sumw}}, 
using the convexity of the exponential function and Jensen's inequality, 
we obtain that} 
\begin{align}
\E \left[ e^{  s ( f(\X) - \E f(\X) ) } \right]
&= \EE{ \Exp{  \sum_{j = 1}^J p_j \dfrac{sw_j}{p_j}  (  f_j(I_j) - \E f_j(I_j) ) } } \notag\\
&\le \EE{ \sum_{j = 1}^J p_j \Exp{  \dfrac{sw_j}{p_j}  (  f_j(I_j) - \E f_j(I_j) ) } } \notag\\
&= \sum_{j = 1}^J p_j \EE{ \Exp{  \dfrac{sw_j}{p_j} (f_j(I_j) - \E f_j(I_j) ) } },
\label{sumInd}
\end{align}
\red{where the last step is by the linearity of expectation.}
\redd{Note that each subset $I_j$ in 
summation \eqref{sumInd} 
is an independent set,
and therefore corresponds to independent variables.}
Hence applying Lemma \ref{mcdiarmidlemma} to each expectation that appears in the above summation gives
\begin{align*}
\sum_{j = 1}^J p_j \EE{ \Exp{  \dfrac{sw_j}{p_j} (f_j(I_j) - \E f_j(I_j) ) } }
\le \sum_{j = 1}^J p_j \Exp{ \dfrac{s^2 w_j^2}{8p_j^2}  \sum_{i \in I_j} c_i ^2 }.
\end{align*}
\mra{By rearranging terms in the exponential of the right-hand side of the inequality above 
and setting 
\begin{align*}
p_j = \dfrac{ w_j \sqrt{  \sum_{i \in I_j} c_i^2 } }
{ \sum_{j = 1}^J \left( w_j \sqrt{ \sum_{i \in I_j} c_i^2 } \right) },
\end{align*}
we have that}
\begin{align*}
\sum_{j = 1}^J p_j \Exp{ \dfrac{s^2 w_j^2}{8p_j^2} \sum_{i \in I_j} c_i^2 }
&= \sum_{j = 1}^J p_j \Exp{ 
\dfrac{s^2}{8} \left(\sum_{j = 1}^J w_j \sqrt{ \sum_{i \in I_j} c_i^2 }\right)^2 } \\
&= \Exp{ 
\dfrac{s^2}{8} \left(\sum_{j = 1}^J w_j \sqrt{ \sum_{i \in I_j} c_i^2 }\right)^2 },
\end{align*}
\red{where the last equality is 
by recalling that the sum of $p_i$ equals $1$.}
By Cauchy-Schwarz inequality,
\begin{align*}
\left(\sum_{j = 1}^J w_j \sqrt{ \sum_{i \in I_j} c_i^2 }\right)^2
&= \left(\sum_{j = 1}^J \sqrt{w_j } \sqrt{w_j \sum_{i \in I_j} c_i^2 }\right)^2 \\
&\le \left(\sum_{j = 1}^J w_j \right) \left(\sum_{j = 1}^J w_j \sum_{i \in I_j} c_i^2 \right)
= \chi_f(G) \sum_{i \in V(G)} c_i^2,
\end{align*}
where the last equality is due to decomposition \eqref{decomposition}
\red{and equation \eqref{sumw}.
The proof is then 
completed by choosing 
$s = 4t/ (\chi_f(G) \sum_{i \in V(G)} c_i^2)$ in \eqref{mark}.
}
\end{proof}

\subsection{Concentration bounds for general Lipschitz functions}

\redd{
We have demonstrated concentration results for functions with specific decomposable constraints. Moving forward, we extend our study to encompass more general Lipschitz functions.  
To begin with, we present concentration results for scenarios involving forest-dependence, wherein the dependency graphs are structured as forests.  It is worth recalling that a forest is a disjoint union of trees.
}

\begin{tm}[\citealt{zhang2019mcdiarmid, zhang}]

Let function $f: \Om\rightarrow \R$ be $\c$-Lipschitz,
and $\Om$-valued random vector $\X$ be $G$-dependent.
If $G$ is a disjoint union of trees $\{ T_i \}_{i\in[k]}$.
Then for $t > 0$,
\begin{align}
\P\left( f(\X) - \E { f(\X) } \ge t \right)
&\le \Exp{ - \frac{2t^2}{ \sum^k_{i=1} c_{\min,i}^2 + \sum_{ \{i, j\} \in E(G) } ( c_i + c_j)^2 } },
\label{eqn:Dependency forest}
\end{align}
where $c_{\min,i} := \min \{ c_j : j \in V(T_i) \}$ for all $i \in [k]$.
\label{ie-dep-f}
\end{tm}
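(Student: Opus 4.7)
The plan is to apply the Cram\'er--Chernoff method and prove a sub-Gaussian moment-generating-function bound with exponent matching $V := \sum_{i=1}^{k} c_{\min,i}^2 + \sum_{\{u,v\}\in E(G)}(c_u+c_v)^2$. By the exponential Markov inequality, $\P(f(\X) - \E f(\X) \ge t) \le \inf_{s > 0} e^{-st}\,\E \Exp{s(f(\X) - \E f(\X))}$, so it suffices to show $\E \Exp{s(f(\X) - \E f(\X))} \le \Exp{s^2 V / 8}$; the choice $s = 4t/V$ then yields \eqref{eqn:Dependency forest}.

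First I reduce to a single tree. Because the $T_i$ are the connected components of $G$, the dependency-graph condition forces the vectors $\X_{V(T_1)},\ldots,\X_{V(T_k)}$ to be mutually independent. Peeling off one tree at a time via the tower property (conditioning successively on $\X_{V(T_1)},\ldots,\X_{V(T_{k-1})}$, under which the remaining function is still $(c_v)_{v\in V(T_k)}$-Lipschitz in $\X_{V(T_k)}$) reduces the problem to proving, for each tree, a single-tree bound $\E \Exp{s(f - \E f)} \le \Exp{s^2 V_{T_i}/8}$ with $V_{T_i} = c_{\min,i}^2 + \sum_{\{u,v\}\in E(T_i)}(c_u+c_v)^2$. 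Multiplying across the $k$ independent factors yields the target $\Exp{s^2 V/8}$.

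For the single-tree bound I use induction on $|V(T)|$, with base case $|V(T)|=1$ supplied by Lemma~\ref{mcdiarmidlemma}. In the inductive step, root $T$ at a vertex $r$ realizing $c_r = c_{\min}$, pick a leaf $v \neq r$ with parent $p$, and set $g(\x_{V(T)\setminus\{v\}}) := \E[f(\X) \mid \X_{V(T)\setminus\{v\}} = \x_{V(T)\setminus\{v\}}]$. Decompose
\[ f(\X) - \E f(\X) = \bigl(f(\X) - g(\X_{V(T)\setminus\{v\}})\bigr) + \bigl(g(\X_{V(T)\setminus\{v\}}) - \E g(\X_{V(T)\setminus\{v\}})\bigr). \]
The first summand has zero conditional mean given $\X_{V(T)\setminus\{v\}}$ and $X_v$-range at most $c_v$, so its conditional MGF is bounded by $\Exp{s^2 c_v^2/8}$ via Hoeffding. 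The residual $g - \E g$ is a Lipschitz function of the $(T-v)$-dependent vector $\X_{V(T)\setminus\{v\}}$ (noting that removing a leaf preserves the dependency-graph structure), with the coefficient at every $w \neq p$ inherited as $c_w$ and the coefficient at $p$ inflated to $c_p + c_v$; the inductive hypothesis then applies to the smaller tree $T - v$.

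The main obstacle will be (i) justifying the Lipschitz coefficients of $g$ and (ii) verifying that the recursion telescopes exactly into $V_T$. The dependency-graph definition furnishes only the marginal independence $X_v \perp \X_{V(T)\setminus\{v,p\}}$ rather than a Markov property, so bounding $|g(\x)-g(\x')|$ coordinate-wise requires a coupling argument that separates the direct effect of changing a coordinate of $f$ from the indirect effect on the conditional law of $X_v$ (the latter controlled via a total-variation estimate against $f$'s $X_v$-range $c_v$); the tree structure must be used to confine the resulting inflation to $p$ alone, producing the $(c_p+c_v)^2$ contribution of the pruned edge. The root choice $r = \arg\min_{w} c_w$ is essential so that the terminal one-vertex residual of the recursion contributes exactly $c_r^2 = c_{\min}^2$, matching the lone $c_{\min}^2$ summand in $V_T$ rather than a looser residual.
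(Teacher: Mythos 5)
The survey states this theorem as a cited result and contains no proof of it, so I am judging your proposal on its own terms and against the arguments in \citet{zhang2019mcdiarmid} and \citet{zhang}. Your high-level architecture (Cram\'er--Chernoff, factorization over the mutually independent tree components, induction on a single tree rooted at a minimum-coefficient vertex, Hoeffding's lemma for each conditional moment generating function) is the right one, and the reduction to a single tree is sound. The inductive step, however, has two genuine gaps. First, the Lipschitz coefficients you assign to $g(\x_{V(T)\setminus\{v\}})=\E[f(\X)\mid \X_{V(T)\setminus\{v\}}=\x_{V(T)\setminus\{v\}}]$ are false in general. A dependency graph encodes only \emph{marginal} independence of non-adjacent sets and yields no conditional independence, so conditioning on all of $\X_{V(T)\setminus\{v\}}$ at once can make the conditional law of the leaf $X_v$ depend on coordinates far from $v$. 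Concretely, take the path $v-p-w$ with $X_v,X_w$ i.i.d.\ fair bits and $X_p=X_v\oplus X_w$: this is a valid dependency graph (the only disjoint non-adjacent pair is $\{v\},\{w\}$, and these are independent), yet $X_v$ is a deterministic function of $(X_p,X_w)$; for $f(\x)=x_v$ (so $c_v=1$, $c_p=c_w=0$) one gets $g(x_p,x_w)=x_p\oplus x_w$, whose Lipschitz coefficient at $w$ is $1=c_w+c_v$ rather than the inherited $c_w=0$. No coupling argument can confine the inflation to the parent $p$; this is precisely why the cited proofs reveal the variables one at a time in an ordering chosen so that, at every step, the already-revealed set is \emph{jointly} independent of the unrevealed set minus a single boundary vertex --- the only situation in which the dependency-graph axiom converts into the conditional statement your step needs.

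Second, even granting your Lipschitz claim, the recursion does not ``telescope exactly into $V_T$'': it provably produces a strictly larger constant, because the inflation $c_p\mapsto c_p+c_v$ is fed into the inductive hypothesis and compounds along root-to-leaf paths. On the path $v_1-v_2-\cdots-v_n$ rooted at $v_1$ with all coefficients equal to $c$, peeling $v_n,v_{n-1},\dots$ in your scheme contributes $c^2+(2c)^2+(3c)^2+\cdots+(nc)^2=\Theta(n^3)\,c^2$, whereas the theorem asserts $c^2+(n-1)(2c)^2=\Theta(n)\,c^2$; already for $n=3$ your recursion gives $14c^2$ against the claimed $9c^2$. The correct accounting charges the full $(c_v+c_{p(v)})^2$ to the step at which $v$ is revealed, leaves the coefficients of the remaining tree \emph{unchanged}, and reserves the single $c_{\min}^2$ for the root revealed last; achieving that requires the ordered-revelation martingale argument above, not the ``integrate out a leaf and inflate its parent'' recursion. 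As written, the proposal does not establish the stated bound.
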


\red{
The proof of this theorem
is by 
first properly ordering $\{ X_{i} \}_{i \in V(G)}$
as $( X_{i} )_{i \in [n]}$,
and rewriting $f(\X) - \E f(\X)$ as a summation 
$\sum_{i \in [n]} V_i$, where 
\begin{align*}
V_i := \E [ f(\X) \mid X_1, \ldots X_i ]
- \E [ f(\X) \mid X_1, \ldots X_{i-1} ].
\end{align*}
In the proof, 
each tree $T_{i}$ is rooted 
by choosing the vertex 
with the minimum Lipschitz coefficient $\min \{ c_j : j \in V(T_i) \}$
in that tree as the root.
It can be shown that 
for some suitable ordering,
\redd{each 
$V_i$ ranges in an interval of length at most $c_i+c_j$, 
where $j$ is the parent of $i$ in the tree,
or simply $c_i$ (if $i$ corresponds to a root vertex).}
The theorem then follows
by applying the Chernoff-Cram\'er technique to $\sum_{i = 1}^n V_i$. 
The detailed proof is a bit involved 
and can be found in \cite{zhang}.
}

\begin{re}
If random variables $(X_1, \ldots, X_n)$ are independent, 
then the empty graph $\overline{K}_n = ([n], \emptyset)$ is a valid dependency graphs for $\{ X_i \}_{i\in[n]}$. 
In this case, inequality (\ref{eqn:Dependency forest})
gets reduced to the McDiarmid's inequality \eqref{mc-ie},
since each vertex is treated as a tree.

If all Lipschitz coefficients are of the same value $c$, then 
the denominator of the exponent
in (\ref{eqn:Dependency forest}) becomes $ k c^2 + 4 (n-k) c^2 = (4n - 3k) c^2 $,
since the number of edges in the forest is $n - k$.
The denominator in Janson's bound (\ref{eqn:janson}) is $2nc^2$,
since the fractional chromatic number of any tree is $2$.
Thus if $k \ge 2n/3$, then bound \eqref{eqn:Dependency forest} is \mra{tighter} than \mra{Janson's concentration inequality \eqref{eqn:janson}.}
\label{re}
\end{re}

\subsubsection{Concentration for general graphs}

In this subsection, we consider the concentration of general Lipschitz functions
\redd{
of variables whose dependency graph
may not be a forest}.
This is
by utilizing tree-partitions of 
the dependency graphs via vertex identifications,
and then applying the forest-dependent results obtained.

\begin{tm}
Let function $f: \mathbf{\Omega}\rightarrow \R$ be $\c$-Lipschitz,
and $\mathbf{\Omega}$-valued random vector $\X$ be $G$-dependent.
Then for any $t>0$,
\begin{align*}
\P( f(\X) - \E f(\X) \ge t )
\le \Exp { - \frac{2t^2}{ D(G, \c) } },
\end{align*}
where 
\begin{align*}
D(G, \c)
:= \min_{(F, \{ F_x \}_{x \in V (F)}) \in \textsf{TP}(G)} 
\left( \sum_{T \in \T(F)} \wc_{\min, T}^2 
+ \sum_{\{ u, v \} \in E(F)} ( \wc_u + \wc_v)^2  \right),
\end{align*}
with $\wc_u := \sum_{i\in F_u} c_i$ for all $u \in V(F)$
and $\wc_{\min, T} := \min \{ \wc_i : i \in V(T) \}$ for all $T \in \mathcal T(F)$.
\label{Concentration Inequality for Dependency Graph}
\end{tm}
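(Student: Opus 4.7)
The plan is to reduce the general graph-dependent case to the forest-dependent concentration bound of Theorem \ref{ie-dep-f} by aggregating the variables inside each bag of a tree-partition into a single super-variable, and then optimise over all tree-partitions in $\TP(G)$.

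Fix any tree-partition $(F, \{ F_x \}_{x \in V(F)}) \in \TP(G)$. For each $x \in V(F)$, define the aggregated random vector $\Y_x := \X_{F_x}$ taking values in $\Om_{F_x}$. The first observation is that $(\Y_x)_{x \in V(F)}$ is $F$-dependent: if disjoint $S, T \subseteq V(F)$ are non-adjacent in $F$, then by condition (ii) of the definition of an $H$-partition, no edge of $G$ has one endpoint in $\bigcup_{x \in S} F_x$ and the other in $\bigcup_{x \in T} F_x$. Hence these two vertex sets of $G$ are non-adjacent, and by the $G$-dependence of $\X$ the families $(\Y_x)_{x \in S}$ and $(\Y_x)_{x \in T}$ are independent.

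Next, fix any ordering of $V(F)$ and define $g: \prod_{x \in V(F)} \Om_{F_x} \rightarrow \R$ by $g(\y) := f(\x)$, where $\x$ is obtained by concatenating the bag vectors $\y_x$ in the natural order. I claim that $g$ is $\wc$-Lipschitz with coefficients $\wc_u = \sum_{i \in F_u} c_i$. Indeed, given $\y, \y'$ and their images $\x, \x'$, for any bag $u$ with $\y_u \ne \y'_u$ the underlying coordinates may differ on any subset of $F_u$, contributing at most $\sum_{i \in F_u} c_i = \wc_u$ to $| f(\x) - f(\x') |$ through the $\c$-Lipschitz bound \eqref{lip}. Summing over bags gives $| g(\y) - g(\y') | \le \sum_{u \in V(F)} \wc_u \I{ \y_u \ne \y'_u }$, as required.

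Applying Theorem \ref{ie-dep-f} to the forest-dependent vector $\Y$ and the $\wc$-Lipschitz function $g$, and using $g(\Y) = f(\X)$ almost surely, we obtain
\begin{align*}
\P( f(\X) - \E f(\X) \ge t )
\le \Exp{ - \frac{2t^2}{ \sum_{T \in \T(F)} \wc_{\min, T}^2 + \sum_{\{ u, v \} \in E(F)} ( \wc_u + \wc_v)^2 } }.
\end{align*}
Since the tree-partition $(F, \{F_x\})$ was arbitrary and the left-hand side does not depend on it, taking the minimum of the right-hand side over $\TP(G)$ yields the claimed bound with denominator $D(G, \c)$. The main technical point requiring care is the independence preservation under vertex identification, i.e.\ verifying that non-adjacency in the quotient $F$ really corresponds to genuine non-adjacency of the pre-image bags in $G$ so that $G$-dependence transfers to $F$-dependence; once that is confirmed, the Lipschitz aggregation is a direct application of the triangle inequality and the optimisation over $\TP(G)$ is immediate.
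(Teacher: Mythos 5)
Your proposal is correct and follows essentially the same route as the paper's proof: aggregate the variables in each bag into a super-variable, observe that the resulting vector is $F$-dependent and the induced function is $\wc$-Lipschitz, apply Theorem~\ref{ie-dep-f}, and minimise over tree-partitions. You spell out the two verification steps (independence transfer via condition (ii) of the $H$-partition and the Lipschitz aggregation) that the paper leaves as ``easy to check,'' which is a welcome addition but not a different argument.
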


\begin{proof}
For every $u \in V(F)$, 
we define a random vector $\mathbf{Y}_u= \{ X_i \}_{ i \in F_u}$,
and treat each $\mathbf{Y}_u$ as a random variable. 
We then define a new random vector $\mathbf{Y}=(\mathbf{Y}_u)_{u\in V(F)}$, 
and let $g(\mathbf{Y})=f(\X)$. 
It is easy to check that $g$ is $\widetilde{\c}$-Lipschitz
\red{by the triangle inequality,}
where $\widetilde{\c}=(\wc_u)_{u\in V(F)}$. 
Hence the theorem immediately follows from Theorem~\ref{ie-dep-f}.
\end{proof}

It is useful to define the notion of {\it forest complexity}, which depends only on the graph,
especially when the Lipschitz coefficients are of the same order.

\begin{df}[Forest complexity]\label{lambda}
The forest complexity of a graph $G$ is defined by
\begin{align*}
\Lambda(G) := \min_{ (F, \{ F_x \}_{x \in V (F)}) \in \textsf{TP}(G) } 
\left( \sum_{T \in \T(F)} \min_{ u \in T } \mid  F_u \mid ^2 
+ \sum_{\{ u, v \} \in E(F)} \mid  F_u \cup F_v\mid ^2 \right),
\end{align*}
where the minimization is over all tree-partitions of $G$.
\end{df}

\begin{re}
The width of a tree-partition is the maximum number of vertices in a bag. 
The tree-partition-width $\tpw(G)$ of $G$ is the minimum width of a tree-partition of $G$.
Let $F \in \textsf{TP}(G)$ be the tree-partition 
with tree-partition width $\tpw(G)$. Then
\begin{align*}
\Lambda(G)
\le \mid\T(F)\mid  \tpw(G)^2 + 4\mid  E(F) \mid \tpw(G)^2
= ( \mid V(F)\mid  + 3\mid  E(F) \mid  ) \tpw(G)^2,
\end{align*}
since the number of disjoint trees in a forest $F$ equals $\mid V(F)\mid  - \mid  E(F) \mid $.
Upper bounds on tree-partition-width $\Lambda(G)$
can be obtained using treewidth and the maximum degree of $G$,
and are beyond the scope of this paper,
see \cite{wood2009tree} for more details.
\end{re}

\red{
If all the Lipschitz coefficients are of the same value,
then Theorem \ref{Concentration Inequality for Dependency Graph} gets simplified.
}
\begin{co}
Let function $f: \Om\rightarrow \R$ be Lipschitz 
with the same coefficient $c$, 
and $\Om$-valued random vector $\X$ be $G$-dependent.
Then for $t > 0$,
\begin{align}
\P( f(\X) - \E f(\X) \ge t )
\le \Exp{ - \frac{2t^2}{ \Lambda(G) c^2 }}.
\end{align}
\label{ie-dep-graph}
\end{co}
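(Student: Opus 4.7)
The plan is to derive this corollary as a direct specialization of Theorem~\ref{Concentration Inequality for Dependency Graph} to the case where the Lipschitz vector is constant, $\c = (c, c, \ldots, c)$. So I would not re-run the martingale / tree-partition machinery; I would just simplify the quantity $D(G, \c)$ in that theorem and identify it with $c^2 \Lambda(G)$ as in Definition~\ref{lambda}.

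The first step is to rewrite the bag-level coefficients. For any tree-partition $(F, \{ F_x \}_{x \in V(F)}) \in \TP(G)$ and any $u \in V(F)$,
\begin{align*}
\wc_u = \sum_{i \in F_u} c_i = c \, |F_u|,
\end{align*}
so for each tree $T \in \T(F)$,
\begin{align*}
\wc_{\min, T} = \min_{u \in V(T)} \wc_u = c \min_{u \in V(T)} |F_u|,
\end{align*}
and hence $\wc_{\min, T}^2 = c^2 \min_{u \in V(T)} |F_u|^2$. For any edge $\{u,v\} \in E(F)$, since $\{F_x\}_{x \in V(F)}$ is a vertex partition of $G$ the bags $F_u$ and $F_v$ are disjoint, so
\begin{align*}
(\wc_u + \wc_v)^2 = c^2 (|F_u| + |F_v|)^2 = c^2 |F_u \cup F_v|^2.
\end{align*}

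Pulling the common factor $c^2$ out of every summand gives, for each fixed tree-partition,
\begin{align*}
\sum_{T \in \T(F)} \wc_{\min, T}^2 + \sum_{\{u,v\} \in E(F)} (\wc_u + \wc_v)^2
= c^2 \left( \sum_{T \in \T(F)} \min_{u \in V(T)} |F_u|^2 + \sum_{\{u,v\} \in E(F)} |F_u \cup F_v|^2 \right).
\end{align*}
Minimizing over all tree-partitions of $G$ on both sides, and using that $c^2 \ge 0$ is a constant so it factors through the minimum, yields $D(G, \c) = c^2 \Lambda(G)$ with $\Lambda(G)$ as in Definition~\ref{lambda}. Substituting this into the conclusion of Theorem~\ref{Concentration Inequality for Dependency Graph} gives exactly the bound $\exp(-2t^2/(\Lambda(G) c^2))$.

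There is no genuine obstacle here; the only thing to be careful about is the disjointness argument that turns $(|F_u| + |F_v|)^2$ into $|F_u \cup F_v|^2$, which is exactly what makes the substitution match Definition~\ref{lambda} rather than producing a strictly larger quantity. Once that is observed, the corollary is immediate.
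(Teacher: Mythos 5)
Your proposal is correct and matches the paper's intent exactly: Corollary~\ref{ie-dep-graph} is stated as an immediate specialization of Theorem~\ref{Concentration Inequality for Dependency Graph} with $c_i = c$, and your computation showing $D(G,\c) = c^2\Lambda(G)$ (including the disjointness of bags giving $|F_u|+|F_v| = |F_u\cup F_v|$) is precisely the omitted verification. Nothing further is needed.
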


%
%
%
%
%
%
%
%

Similar to the theorems derived above, 
Corollary \ref{ie-dep-graph} also gives an exponentially decaying bound on the probability of deviation. 
The rate of decay is determined by the Lipschitz coefficients of the function, 
and the forest complexity of the dependency graph. 
Intuitively, the closer the dependency graph is to a forest, 
the faster the deviation probability decays. 
This uncovers how the dependencies among random variables influence concentration.


\subsubsection{Examples}
\label{Examples and Applications}

\redd{Here we present several explicit examples 
to demonstrate and estimate the forest complexity where random variables are structured as graphs.
All these examples naturally emerge in the context of random processes that are intricately intertwined within graph structures.
}

\begin{ex}[$G$ is a tree]
In this case, 
$\Lambda(G) \le \mid E(G)\mid (1+1)^2 + 1 = 4n - 3$. 
We get an upper bound of $\Lambda(G)$ that is linear in the number of variables, 
which is comparable to 
\mra{Janson's concentration inequality} up to some constant factor (see (\ref{eqn:janson}) with $\chi_f(G)=2$ and Remark \ref{re}).
\label{example tree}
\end{ex}

\begin{ex}[$G$ is a cycle $C_n$]
If $n$ is even, a tree-partition is illustrated in Figure~\ref{evencycle}, 
where the resulting forest is a path $F$ of length $n/2$
with each gray belt representing a `bag'. 
We will keep this convention for the rest of this paper. 

By the illustrated tree-partition, 
$\Lambda(G) \le 2 \times (1+2)^2 + (n/2-2)(2+2)^2+1 = O(n) $.
When $n$ is odd, according to the tree-partition shown in Figure~\ref{oddcycle},
$\Lambda(G)\le (1+2)^2 + (\frac{n - 1}{2}-1)(2+2)^2 +1 = O(n)$. 
Since $\chi_f \ge 2$ for cycles, our bound is again comparable to \mra{Janson's concentration inequality} \eqref{eqn:janson}
up to some constant multiplicative factor.
\label{example cycle}
\begin{figure}[htb]
\begin{minipage}{.5\textwidth}
\centering
\begin{subfigure}[c]{.4\textwidth}\centering
\begin{tikzpicture}
\node[diamond,draw=black,fill=white,scale=.8] (i) at (0:1cm) {};
\node[circle,draw=black,fill=white] (j) at (60:1cm) {};
\node[diamond,draw=black,fill=white,scale=.8] (k) at (120:1cm) {};
\node[rectangle,draw=black,fill=white] (l) at (180:1cm) {};
\node[regular polygon,regular polygon sides=3,draw,scale=0.6] (m) at (240:1cm) {};
\node[rectangle,draw=black,fill=white] (n) at (300:1cm) {};
\draw[-]
(i) edge (j) (j) edge (k) (k) edge (l) (l) edge (m) (m) edge (n) (n) edge (i);
\begin{pgfonlayer}{background}
\draw[edgeBIG] (0:1cm) edge (120:1cm);
\draw[edgeBIG] (180:1cm) edge (300:1cm);
\end{pgfonlayer}
\end{tikzpicture}
\caption*{$G$}
\end{subfigure}
\begin{subfigure}[c]{.2\textwidth}\centering
\begin{tikzpicture}[scale=.6]
\node[circle,draw=black,fill=white,scale=.6] (A) at (0, 3) {};
\node[diamond,draw=black,fill=white,scale=.5] (B) at (0, 2) {};
\node[rectangle,draw=black,fill=white,scale=0.7] (C) at (0, 1) {};
\node[regular polygon,regular polygon sides=3,draw,scale=0.4] (E) at (0, 0) {};
\draw[-]  
(A) edge (B) (B) edge (C) (C) edge (E);
\end{tikzpicture}
\caption*{$F$}
\end{subfigure}
\caption{A tree-partition of $C_6$.}
\label{evencycle}
\end{minipage}%
\begin{minipage}{.5\textwidth}
\centering
\begin{subfigure}[c]{.4\textwidth}\centering
\begin{tikzpicture}
\node[rectangle,draw=black,fill=white] (i) at (0:1cm) {};
\node[circle,draw=black,fill=white] (j) at (72:1cm) {};
\node[circle,draw=black,fill=white] (k) at (144:1cm) {};
\node[rectangle,draw=black,fill=white] (l) at (216:1cm) {};
\node[regular polygon,regular polygon sides=3,draw,scale=0.6] (m) at (288:1cm) {};
\draw
(i) edge (j) (j) edge (k) (k) edge (l) (l) edge (m) (m) edge (i);
\begin{pgfonlayer}{background}
\draw[edgeBIG] (0:1cm) edge (216:1cm);
\draw[edgeBIG] (72:1cm) edge (144:1cm);
\end{pgfonlayer}
\end{tikzpicture}
\caption*{$G$}
\end{subfigure}
\begin{subfigure}[c]{.2\textwidth}\centering
\begin{tikzpicture}[scale=.6]
\node[circle,draw=black,fill=white,scale=.6] (A) at (0, 3) {};
\node[rectangle,draw=black,fill=white,scale=0.7] (C) at (0, 1.5) {};
\node[regular polygon,regular polygon sides=3,draw,scale=0.4] (E) at (0, 0) {};
\draw[-]  
(A) edge (C) (C) edge (E);
\end{tikzpicture}
\caption*{$F$}
\end{subfigure}
\caption{A tree-partition of $C_5$.}
\label{oddcycle}
\end{minipage}%
\end{figure}
\end{ex}

\begin{ex}[$G$ is a grid] Suppose $G$ is a two-dimensional $(m\times m)$-grid. Then $n=m^2$. Considering the tree-partition illustrated in Figure~\ref{grid}, we have
\begin{align*}
\Lambda(G) 
\le 1 + 2 \sum_{i=1}^m (2m - 1)^2
= \dfrac{2}{3} m(2m+1)(2m-1)+1
= O(m^3)
= O(n^\frac{3}{2}).
\end{align*}
\label{example grid}
\begin{figure}[H]
\centering
\begin{subfigure}{.6\textwidth}
\begin{tikzpicture}\centering
\draw[-] (0, 0) grid [step=1] (3, 3);

\node [star,draw,scale=0.8,fill=white]  (00) at (0, 0) {};
\node [diamond,draw=black,fill=white,scale=.8]  (33) at (3, 3) {};

\node [regular polygon,regular polygon sides=5,draw,scale=0.8,fill=white]  (01) at (0, 1) {};
\node [regular polygon,regular polygon sides=5,draw,scale=0.8,fill=white]  (10) at (1, 0) {};

\node [rectangle,draw=black,fill=white]  (02) at (0, 2) {};
\node [rectangle,draw=black,fill=white]  (11) at (1, 1) {};
\node [rectangle,draw=black,fill=white]  (20) at (2, 0) {};

\node [regular polygon,regular polygon sides=3,draw,scale=0.6,fill=white]  (13) at (1, 3) {};
\node [regular polygon,regular polygon sides=3,draw,scale=0.6,fill=white]  (22) at (2, 2) {};
\node [regular polygon,regular polygon sides=3,draw,scale=0.6,fill=white]  (31) at (3, 1) {};

\node [regular polygon,regular polygon sides=6,draw,scale=0.8,fill=white]  (23) at (2, 3) {};
\node [regular polygon,regular polygon sides=6,draw,scale=0.8,fill=white]  (32) at (3, 2) {};

\node [circle,draw=black,fill=white]  (03) at (0, 3) {};
\node [circle,draw=black,fill=white]  (12) at (1, 2) {};
\node [circle,draw=black,fill=white]  (21) at (2, 1) {};
\node [circle,draw=black,fill=white]  (30) at (3, 0) {};

\node []  () at (-2.5, 1.5) {$G$};
\begin{pgfonlayer}{background}
\draw[edgeBIG] (2,3) edge (3,2);

\draw[edgeBIG] (1,3) edge (3,1);

\draw[edgeBIG] (0,3) edge (3,0);

\draw[edgeBIG] (0,2) edge (2,0);

\draw[edgeBIG] (0,1) edge (1,0);
\end{pgfonlayer}
\end{tikzpicture}
\end{subfigure}

\vspace{.5cm}

\begin{subfigure}[c]{.6\textwidth}
\begin{tikzpicture}
\centering
\node[] (0) at (-1, 0) {$F$};
\node[star,draw,scale=0.8,fill=white] (A) at (0, 0) {};
\node[regular polygon,regular polygon sides=5,draw,scale=0.8,fill=white] (B) at (1, 0) {};
\node[rectangle,draw=black,fill=white] (C) at (2, 0) {};
\node[circle,draw=black,fill=white] (D) at (3, 0) {};
\node[regular polygon,regular polygon sides=3,draw,scale=0.6,fill=white] (E) at (4, 0) {};
\node[regular polygon,regular polygon sides=6,draw,scale=0.8,fill=white] (F) at (5, 0) {};
\node[diamond,draw=black,fill=white,scale=.8] (G) at (6, 0) {};
\draw[-]  
(A) edge (B) (B) edge (C) (C) edge (D) (D) edge (E) (E) edge (F) (F) edge (G);
\end{tikzpicture}
\end{subfigure}
\caption{A tree-partition of $4\times 4$ gird.}
\label{grid}
\end{figure}
\end{ex}

\section{Generalization for learning from graph-dependent data}
\label{sec:3}


We now apply \redd{the} concentration bounds obtained above 
to derive generalization bounds 
for supervised learning from graph-dependent data.
Let
\begin{align*}
\S := ((x_1, y_1), \ldots, (x_n, y_n)) \in (\mathcal X \times \mathcal Y)^n
\end{align*}
be a $G$-dependent training sample of size $n$,
where $\mathcal X$ denotes the input space and $\mathcal Y$ denotes the 
set of labels.
Let $\D$ be the underlying distribution of data on $\mathcal X \times \mathcal Y$.
\rui{Note that the sample $\S$ 
contains dependent data with the same marginal distribution $\D$.}

\mra{Further we fix some $\ell: \mathcal Y \times \mathcal Y \rightarrow \R_+$ as a non-negative loss function. 
For any hypothesis $f: \mathcal X \rightarrow \mathcal Y$, 
the empirical error on sample $\S$ is defined by}
\[
\RH_\S(f) := \frac{1}{n} \sum_{i = 1}^n \ell(y_i, f ( x_i )). 
\]
For learning from dependent data, the generalization error can be defined in various ways. 
We adopt the following widely-used one ~\citep{meir2000nonparametric,lozano2006convergence,steinwart2009fast,hang2014fast}
\begin{equation}
R(f) := \E_{(x, y)\sim \D} [ \ell(y, f(x)) ],
\label{gen error}
\end{equation}
which assumes that the test data is independent of the training sample.

\subsection{Generalization bounds via fractional Rademacher complexity}

\mra{Our first approach is based on Rademacher complexity \citep{BartlettM02}.
This approach can be extended to accommodate interdependent data by utilizing the decomposition into independent sets
described in Section \ref{sec:graphcovering}.}

\begin{df}[Fractional Rademacher complexity, 
\citealt{usunier2005generalization}]\label{df:FracCover} 

\mra{Let $\{ ( I_{j}, w_{j} ) \}_{j}$ be a fractional independent vertex cover of a dependency graph $G$ 
constructed over a training set $\S$ of size $n$,  with $\sum_j w_j = \chi_f(G)$. 
Let $\F=\{ f:\mathcal X \rightarrow \mathcal Y\}$ be the hypothesis class. 
Then, the {\it empirical fractional Rademacher complexity} of $\F$ given $\S$ is defined by
} 
\begin{equation}
\label{eq:FracRadCom}
 \widehat{\mathfrak R}_{\S}^{\star}(\F) = \frac{1}{\mra{n}} \E_{\boldsymbol{\sigma}} 
\left[ \sum_{j} w_j 
\sup_{f \in \F} \left( \sum_{i \in I_j} \sigma_i f ( x_i ) \right) \right],
\end{equation}
\mra{where $\boldsymbol{\sigma}=(\sigma_i)_{1\le i\le n}$ 
denote a vector of $n$ independent Rademacher variables,
that is, $\mathbb{P}(\sigma_i=-1)=\mathbb{P}(\sigma_i=+1)=1/2$ for each $i \in [n]$. 
Moreover, the {\it fractional Rademacher complexity} of $\F$ is defined by
\begin{align*}
\mathfrak R^{\star}(\F)
= \E_{\S} \left[ \widehat{\mathfrak R}_{\S}^{\star}(\F) \right].
\end{align*}
}
\end{df}

\begin{re}
\label{RadComiid}
In the i.i.d. situation, 
the set of singleton vertices is a valid fractional independent vertex cover, 
and the fractional Rademacher complexity \eqref{eq:FracRadCom} 
simplifies to the original empirical Rademacher complexity \citep{BartlettM02} defined by
\begin{equation}
\label{eq:RadCom}
\widehat{\mathfrak R}_{\S}(\F) 
= \frac{1}{n} \E_{\boldsymbol{\sigma}} 
\left[ \sup_{f \in \F} \left( \sum_{i \in [n]} \sigma_i f ( x_i ) \right) \right].
\end{equation}
Additionally, because the former is a sum of empirical Rademacher complexities, 
it enables one to get estimates by extending the properties of the empirical Rademacher complexity.
\end{re}

In the following, we give an example of a function class 
of linear functions with bounded-norm weight vectors,
for which the empirical Rademacher averages
can be bounded directly.

\begin{tm}
\label{thm:FracRad}
\redd{Let $\F=\{x\mapsto \langle \mathbf{w},\phi(x)\rangle : \|\mathbf{w}\|\le B\}$ 
be a class of linear functions with bounded weights
in a feature space
such that 
$\|\phi ( x )\| \le \Gamma$ for all $x$.}
Then
\begin{equation}
    \widehat{\mathfrak R}_{\S}^{\star}(\F) \le B \Gamma\sqrt{\frac{\chi_f(G)}{n}}.
    \label{FracRad-linear}
\end{equation}
\end{tm}
\begin{proof}

\red{In view of the definition of the empirical fractional Rademacher complexity
\eqref{eq:FracRadCom},
by the linearity of expectation, we have
\begin{align*}
\widehat{\mathfrak R}_{\S}^{\star}(\F) &= \frac{1}{n} \sum_{j} w_j 
\E_{\boldsymbol{\sigma}} 
\left[  
\sup_{\|\mathbf{w}\|\le B} \left( \sum_{i \in I_j} 
\left\langle \mathbf{w}, \sigma_i \phi ( x_i ) \right\rangle\right) \right]\\
&\le  \frac{B}{n}  \sum_{j} w_j \E_{\boldsymbol{\sigma}} \norm{\sum_{i \in I_j} \sigma_i \phi ( x_i )} \le  \frac{B}{n}  \sum_{j} w_j \left(\E_{\boldsymbol{\sigma}}  \norm{\sum_{i \in I_j} \sigma_i \phi ( x_i )}^2\right)^{1/2},
\end{align*}
where the first inequality is by noting $\|\mathbf{w}\|\le B$,
and applying Cauchy-Schwarz inequality to the inner product,
and the second inequality is by Jensen's inequality.
}

As the Rademacher variables are independent, 
we have $\E[\sigma_i\sigma_k]=\E[\sigma_i]\E[\sigma_k]=0$ for any distinct $i, k$. 
Hence we have
\begin{align*}
\E_{\boldsymbol{\sigma}}  \norm{\sum_{i \in I_j} \sigma_i \phi ( x_i )}^2
= \E_{\boldsymbol{\sigma}} \left[ \sum_{i,k \in I_j} \sigma_i\sigma_k \langle \phi ( x_i ),\phi(x_k) \rangle \right]
= \sum_{i \in I_j} \|\phi ( x_i )\|^2,
\end{align*}
and therefore,
\[
 \widehat{\mathfrak R}_{\S}^{\star}(\F) \le  \frac{B}{n}  \sum_{j} w_j  \left(\sum_{i \in I_j} \|\phi ( x_i )\|^2\right)^{1/2}.
\]

Since we have $\|\phi ( x_i )\| \le \Gamma$ in the feature space,
then
\begin{align*}
\widehat{\mathfrak R}_{\S}^{\star}(\F) 
\le \frac{B\Gamma}{n}  \sum_{j} w_j  \sqrt{\mid I_j \mid}
= \frac{B\Gamma\chi_f(G)}{n}  \sum_{j} \frac{w_j}{\chi_f(G)}  \sqrt{\mid I_j \mid}.
\end{align*}
By noticing that $\sum_{j} w_j / \chi_f(G) = 1$, 
using Jensen's inequality for the square root function yields
\begin{align*}
\widehat{\mathfrak R}_{\S}^{\star}(\F) 
\le \frac{B\Gamma\chi_f(G)}{n} \sqrt{ \sum_{j} \frac{w_j}{\chi_f(G)} \mid I_j \mid }
= \frac{B\Gamma\sqrt{\chi_f(G)}}{n}\sqrt{\sum_j w_j \mid I_j \mid}.
\end{align*}
The result follows then by noting $\sum_j w_j \mid I_j \mid=n$
by \eqref{decom-size}.
\end{proof}

\begin{re}
Note that $\phi$ could be the feature mapping corresponding to 
the last hidden layer of a neural network, or a kernel function.
\red{
In particular, 
under the assumption of Theorem \ref{thm:FracRad},
let $\phi$ be a feature mapping associated to 
a kernel $K$ such that $K(x, x) \le \Gamma^{2}$
for all $x$.
Then the standard Rademacher complexity 
of kernel-based hypotheses \cite[Theorem 6.12]{mohri2018foundations} gives that 
$\widehat{\mathfrak R}_{\S}(\F) \le B\Gamma/\sqrt{n}$,
and in comparison, our bound \eqref{FracRad-linear} has an additional factor $\sqrt{\chi_{f}(G)}$,
which becomes exactly $1$ as in Remark \ref{RadComiid}.
}

It is also worth noting that the fractional Rademacher complexity is defined 
for a given fractional cover. 
In general, our analysis holds 
for any optimal fractional cover; 
nevertheless, various cover selections may result in different bound values. Nonetheless, in practice, this influence is unlikely to have a significant impact. 
\end{re}

We now obtain generalization bounds using the fractional Rademacher complexity.

\begin{tm}[\citealt{usunier2005generalization, amini2015learning}]
\label{thm:GenBounds}
Given a sample $\S$ of size $n$ with dependency graph $G$
and a loss function $\ell : \mathcal Y \times \widehat{\mathcal Y} \rightarrow [0, M]$. 
\red{Let $\F$ denote the hypothesis class.}
Then, for any $\delta \in (0, 1)$, with probability at least $1 - \delta$,
we have, \red{for all $f \in \F$, that}
\begin{equation}
\label{eq:FracRadGenBounds}
R(f) 
\le \RH_{\S}(f) 
+ 2
\mra{\mathfrak R^{\star}(\ell\circ\F)}
+ M\sqrt{\dfrac{\chi_f(G)}{2\mra{n}} \Log{\dfrac{1}{\delta}} },
\end{equation}
and
\begin{equation}
\label{eq:EmpFracRadGenBounds}
R(f) 
\le \RH_{\S}(f) 
+ 2\widehat{\mathfrak R}_\S^{\star}(\ell\circ\F)
+ 3M\sqrt{\dfrac{\chi_f(G)}{2\mra{n}} \Log{\dfrac{2}{\delta}} },
\end{equation}
where $\ell\circ\F=\{(x,y)\mapsto \ell(y,f(x)) ~\mid~  f \in \F\}$.
\end{tm}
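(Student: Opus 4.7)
The plan is to mimic the classical symmetrization--concentration argument for Rademacher bounds, adapting it to graph-dependence via the fractional decomposition of $G$ into independent sets. Let $\Phi(\S) := \sup_{f\in\F}(R(f)-\RH_\S(f))$ and fix an optimal fractional independent vertex cover $\{(I_j, w_j)\}_j$ of $G$ with $\sum_j w_j = \chi_f(G)$. Because $\sum_{j:\, i \in I_j} w_j = 1$ for every $i$, I would rewrite
\begin{align*}
R(f) - \RH_\S(f) = \sum_j \alpha_j \bigl(R(f) - \RH_{\S_{I_j}}(f)\bigr), \qquad \alpha_j := \frac{w_j |I_j|}{n},
\end{align*}
where $\RH_{\S_{I_j}}(f) := \frac{1}{|I_j|}\sum_{i\in I_j}\ell(y_i, f(x_i))$ and $\sum_j \alpha_j = 1$. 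Passing the supremum inside the sum yields the upper bound $\Phi(\S) \le \tilde\Phi(\S) := \sum_j \alpha_j \Phi_j(\S_{I_j})$, where $\Phi_j(\S_{I_j}) := \sup_f (R(f) - \RH_{\S_{I_j}}(f))$ depends only on the independent subsample $\S_{I_j}$.

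Next I would apply Theorem~\ref{thm:FracConBounds} to $\tilde\Phi$. Each $\Phi_j$ is $(M/|I_j|)$-Lipschitz on $\Om_{I_j}$ by the bounded-loss assumption, so $(|I_j|/n)\Phi_j$ is $(M/n)$-Lipschitz on each coordinate, and $\tilde\Phi = \sum_j w_j\cdot(|I_j|/n)\Phi_j$ is a genuine decomposable $\c$-Lipschitz representation with $c_i = M/n$ and weights $w_j$ summing to $\chi_f(G)$. Hence with probability at least $1-\delta$,
\begin{align*}
\Phi(\S) \le \tilde\Phi(\S) \le \E[\tilde\Phi(\S)] + M\sqrt{\frac{\chi_f(G)}{2n}\log(1/\delta)}.
\end{align*}
To control $\E\tilde\Phi$, I would use the usual symmetrization on each independent subsample $\S_{I_j}$, obtaining $\E\Phi_j \le \frac{2}{|I_j|}\E_\sigma \sup_f \sum_{i\in I_j}\sigma_i\,\ell(y_i,f(x_i))$. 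Summing with weights $\alpha_j$, the factors $|I_j|$ cancel and I recover exactly $\E\tilde\Phi \le 2\mathfrak R^\star(\ell\circ\F)$, which combined with the concentration inequality yields \eqref{eq:FracRadGenBounds}.

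For the empirical version \eqref{eq:EmpFracRadGenBounds} I would invoke Theorem~\ref{thm:FracConBounds} a second time, applied to the map $\S \mapsto \hat{\mathfrak R}_\S^\star(\ell\circ\F)$. Since for each $j$ the map $\S_{I_j}\mapsto \E_\sigma \sup_f\sum_{i\in I_j}\sigma_i \,\ell(y_i,f(x_i))$ has coordinate-wise Lipschitz constant $M$, the weighted sum $\hat{\mathfrak R}_\S^\star = \frac{1}{n}\sum_j w_j\cdot(\cdots)$ is decomposable $(M/n,\ldots,M/n)$-Lipschitz, yielding $\mathfrak R^\star \le \hat{\mathfrak R}_\S^\star + M\sqrt{\chi_f(G)\log(2/\delta)/(2n)}$ with probability at least $1-\delta/2$. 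Applying \eqref{eq:FracRadGenBounds} at level $\delta/2$ and combining by a union bound gives \eqref{eq:EmpFracRadGenBounds}, with the factor~$3$ coming from one concentration term for $\Phi$ plus the conversion $2\mathfrak R^\star \le 2\hat{\mathfrak R}_\S^\star + 2M\sqrt{\cdots}$.

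The main obstacle is conceptual rather than computational: $\Phi$ itself is not decomposable in the sense demanded by Theorem~\ref{thm:FracConBounds} (a supremum does not distribute over a sum), so one must first dominate it by the genuinely decomposable $\tilde\Phi$ before invoking the concentration inequality. The attendant bookkeeping---choosing $\alpha_j = w_j|I_j|/n$ so that each Lipschitz constant drops from $M/|I_j|$ to $M/n$ while the weights still sum to $\chi_f(G)$---is precisely what drives the $\sqrt{\chi_f(G)/(2n)}$ rate rather than a worse dependence on the sizes of the individual independent sets.
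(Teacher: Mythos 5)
Your proof is correct and follows essentially the same route as the paper's: decompose along an optimal fractional independent vertex cover so that the uniform deviation becomes a decomposable $(M/n,\ldots,M/n)$-Lipschitz function with weights summing to $\chi_f(G)$, apply Theorem~\ref{thm:FracConBounds}, symmetrize within each independent block to recover $2\mathfrak R^{\star}(\ell\circ\F)$, and obtain the empirical version by a second application of the concentration bound together with a union bound. The only cosmetic difference is that you dominate $\sup_{f\in\F}(R(f)-\RH_{\S}(f))$ by the genuinely decomposable $\tilde\Phi$ before concentrating, whereas the paper introduces the ghost sample first and concentrates the resulting two-sample quantity; both give the same constants.
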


\begin{proof}
For any $f\in \F$, we have $\RH_\S(f)$ is an unbiased estimator of $R(f)$,
\rui{since the data points in the sample $\S$ are assumed to 
be $G$-dependent and have the same marginal distribution.}
Hence considering \red{a $G$-dependent ``ghost'' sample 
$\S'
= ((x_1', y_1'), \ldots, (x_n', y_n'))$
that is independently
generated from the same distribution as $\S$}, we have
\begin{align*}
\sup_{f \in \F} (R(f)-\RH_\S(f)) 
= \sup_{f \in \F} \left(\E_{\S'}\RH_{\S'}(f) -\RH_\S(f) \right)
= \sup_{f \in \F} \left(\E_{\S'}\left[\RH_{\S'}(f) -\RH_\S(f)\right]\right).
\end{align*}
\rui{Let $\{ ( I_{j}, w_{j} ) \}_{j \in [J]}$ be a fractional independent vertex cover of the dependency graph $G$ with $\sum_j w_j = \chi_f(G)$.}
By Jensen's inequality and the convexity of the supremum, 
we get
\begin{align*}
&\sup_{f \in \F} \left(\E_{\S'}\left[\RH_{\S'}(f) -\RH_\S(f)\right]\right) 
\le \E_{\S'} \left[ \sup_{f \in \F}(\RH_{\S'}(f) -\RH_\S(f)) \right]  \\
&= \E_{\S'} \left[ \sup_{f \in \F}
\left( \frac{1}{n} 
\sum_{i \in [n]}(\ell(y'_i,f(x'_i))-\ell(y_i,f ( x_i )))\right) \right]  \\
&= \frac{1}{n} \E_{\S'} \left[  \sup_{f \in \F}
\left(\sum_{j=1}^J w_j\sum_{i \in I_j}(\ell(y'_i,f(x'_i))-\ell(y_i,f ( x_i )))\right) \right],
\end{align*}
where the second equality is due to the decomposition \eqref{decomposition}. 

Then by the sub-additivity of the supremum, we have
\begin{align*}
\sup_{f \in \F} (R(f)-\RH_\S(f))
\le g(\S),
\end{align*}
where \red{$g(\S)$ is defined by
\begin{align*}
g(\S)
= \frac{1}{n} \E_{\S'} \left[  
\sum_{j=1}^J w_j 
\sup_{f \in \F} \left( \sum_{i \in I_j}(\ell(y'_i,f(x'_i))-\ell(y_i,f ( x_i )))\right) \right],
\end{align*}
and satisfies $g(\S)
= \sum_{j} w_j g_{j} ( \S )$, 
where for each $j$, 
\begin{align*}
g_j( \S ):=~\frac{1}{n} \E_{\S'} 
\left[ \sup_{f \in \F} \left(\sum_{i \in I_j}(\ell(y'_i,f(x'_i))-\ell(y_i,f ( x_i )))\right) \right].
\end{align*}
Note that each function $g_j$ 
has bounded difference $M/n$
and satisfies \eqref{decom},
and therefore is a decomposable Lipschitz function.}
Then using Theorem \ref{thm:FracConBounds},
for all $\delta\in (0,1)$, with probability at least $1-\delta$, we have
\begin{align*}
&\sup_{f \in \F} (R(f)-\RH_\S(f))
\le 
\E_{\S} [ g(\S) ] + M \sqrt{\dfrac{\chi_f(G)}{2n}\Log{\dfrac{1}{\delta}}} \\
&= \sum_{j=1}^J \frac{w_j}{n} 
\E_{\S, \S'} \left[ \sup_{f \in \F} 
\left( \sum_{i \in I_j}(\ell(y'_i,f(x'_i))-\ell(y_i,f ( x_i )))
\right)
\right]
+M\sqrt{\dfrac{\chi_f(G)}{2n}\Log{\dfrac{1}{\delta}}}.
\end{align*}
\red{
Note that
\begin{multline*}
\E_{\S, \S'} \left[ \sup_{f \in \F} 
\left( \sum_{i \in I_j}(\ell(y'_i,f(x'_i))-\ell(y_i,f ( x_i )))
\right)
\right] \\
= \E_{\S, \S'} \E_{\boldsymbol{\sigma}} 
    \left[ \sup_{f \in \F} \left( \sum_{i \in I_j}\sigma_i(\ell(y'_i,f(x'_i))-\ell(y_i,f ( x_i ))) \right) 	\right],
\end{multline*}
since the introduction of Rademacher variables $\boldsymbol{\sigma}=(\sigma_i)_i$, 
uniformly taking values in $\{-1,+1\}$, 
does not change the expectation.}
Indeed, for $\sigma_i=+1$, 
the corresponding summand stays unaltered, 
and for $\sigma_i=-1$, 
the corresponding summand reverses sign,
which is the same as flipping $(x_i,y_i)$ and $(x'_i,y'_i)$ between $\S$ and $\S'$. 
This change has no effect on the overall expectation 
as we are considering the expectation over $\S$ and $\S'$,
\red{and by noting that $\S$ and $\S'$ are independent and $I_{j}$ is some independent set.}
Therefore, we have
\begin{align}
    &\quad\sup_{f \in \F} (R(f)-\RH_\S(f)) \nonumber \\ 
    &\le \sum_{j=1}^J \frac{w_j}{n} \E_{\S, \S'} \E_{\boldsymbol{\sigma}} 
    \left[ \sup_{f \in \F} \left( \sum_{i \in I_j}\sigma_i(\ell(y'_i,f(x'_i))-\ell(y_i,f ( x_i ))) \right) 	\right]
    + M\sqrt{\dfrac{\chi_f(G) }{2n} \Log{\dfrac{1}{\delta}} } \notag\\
	&\le 2 \sum_{j=1}^J \frac{w_j}{n} 
    \E_{\boldsymbol{\sigma}} \E_{\S}
    \left[ \sup_{f \in \F} \left( \sum_{i \in I_j}\sigma_i(\ell(y_i,f ( x_i ))) \right) 	\right]
    + M\sqrt{\dfrac{\chi_f(G) }{2n} \Log{\dfrac{1}{\delta}} },
\label{sym}
\end{align}
where the last step uses the sub-additivity of the supremum.
Then in view of Definition \ref{df:FracCover} of $\widehat{\mathfrak R}_{\S}^{\star}$,
we obtain
\begin{align*}
    \sup_{f \in \F} (R(f)-\RH_\S(f)) 
    \le 2\E_{\S} \left[ \widehat{\mathfrak R}_{\S}^{\star}(\ell\circ\F) \right]
    +M\sqrt{\dfrac{\chi_f(G) }{2n}\Log{\dfrac{1}{\delta}}}.
\end{align*}
Therefore the first bound \eqref{eq:FracRadGenBounds} follows from the definition of the supremum,
that is, for all $f\in \F$,
\[
R(f) - \RH_\S(f) \le \sup_{f \in \F} \left(R(f)-\RH_\S(f)\right).
\]

\redd{
Note that
\begin{align*}
\widehat{\mathfrak R}_{\S}^{\star}(\ell\circ\F) 
= \sum_{j} w_j
\left( \frac{1}{\mra{n}} \E_{\boldsymbol{\sigma}} 
\left[ 
\sup_{f \in \F}
\left( \sum_{i \in I_j}\sigma_i(\ell(y_i,f ( x_i ))) \right)
\right]
\right)
\end{align*}
satisfies the condition of Theorem \ref{thm:FracConBounds}
with bounded difference $M/n$,
and therefore concentrates around its expectation 
$\mathfrak R^{\star}(\ell\circ\F)$.
Then using the union bound with \eqref{eq:FracRadGenBounds},
yields the second bound \eqref{eq:EmpFracRadGenBounds}.}
\end{proof}

From Remark \ref{RadComiid}, 
Theorem \ref{thm:GenBounds} is a natural extension of the standard Rademacher generalization bounds 
when examples are identically and independently distributed 
(see, for example, \citealt[Theorem 3.3]{mohri2018foundations}), 
as in this case, $\chi_f(G)=1$.

\begin{re}
\redd{
To use the symmetrization technique in equation \eqref{sym}, 
the variables involved in the same summation need to be independent.   
Consequently, when extending the concept of Rademacher complexities to scenarios involving interdependent variables, it becomes necessary to decompose the set of random variables into independent sets. In this context, the fractional independent vertex cover $\{ ( I_{j}, w_{j} ) \}_{j}$
with $\sum_{k} w_{k} = \chi_f(G)$
emerges as a pivotal tool for achieving an optimal decomposition,
\red{as $\chi_f(G)$ is the minimum of $\sum_{k} w_{k}$ over all fractional independent vertex covers.}
}
\end{re}
\subsection{Generalization bounds via algorithmic stability}

This section establishes stability bounds for learning from graph-dependent data, using the concentration inequalities derived in the last section.
Algorithmic stability 
has been used in the study of classification and regression to derive generalization bounds ~\citep{rogers1978finite,devroye1979distribution,kearns1999algorithmic,kutin2002almost}.
A key advantage of stability bounds is that they are designed for specific learning algorithms, 
exploiting particular properties of the algorithms. 

Since uniform stability was introduced in \cite{bousquet2002stability},
it has been among the most widely used notions of algorithmic stability.
Given a training sample $\S$ of size $n$,
for every $i\in [n]$, 
removing the $i$-th element from $\S$ 
results in a sample of size $n-1$, which is denoted by
\begin{align*}
\S^{\setminus i} 
:= ( (x_1, y_1), \ldots, (x_{i - 1}, y_{i - 1}), (x_{i + 1}, y_{i + 1}) \ldots, (x_n, y_n)).
\end{align*}

A learning algorithm $\A$ is a function that maps the training set $\S$ 
onto a function $f^{\A}_{\S}: \mathcal X \rightarrow \mathcal Y$.

\begin{df}[Uniform stability,~\citealt{bousquet2002stability}]
Given an integer $n > 0$, the learning algorithm $\A$ is $\beta_n$-uniformly stable 
with respect to the loss
function $\ell$, 
if for any $i\in [n]$, $\S \in (\mathcal X \times \mathcal Y)^n$, and $(x, y) \in \mathcal X \times \mathcal Y$,
it holds that
\begin{align}
\mid  \ell(y, f^{\A}_{\S}(x)) - \ell(y, f^{\A}_{\S^{\setminus i}}(x))  \mid  \le \beta_n.
\label{stable}
\end{align}
\end{df}
Intuitively, 
small perturbations of the training sample have little effect on the learning for a stable learning algorithm.

Now, we begin our analysis by considering \red{the difference between the empirical error 
and the generalization error of a learning algorithm $f^{\A}_{\S}$ 
trained over a $G$-dependent sample $\S$,
formally defined by
\begin{align}
\Phi_{\A}(\S) := R(f^{\A}_{\S}) - \RH_\S (f^{\A}_{\S}).
\label{phi}
\end{align}
The mapping $\Phi_{\A}: (\mathcal X \times \mathcal Y)^n \rightarrow \R$ 
will play a critical role in estimating $R(f^{\A}_{\S})$ via stability. 
We will first bound the probability of the deviation of $\Phi_{\A}(\S)$ from its expectation 
(Lemma \ref{stability concentration}), 
and then obtain an upper bound of expected value of $\Phi_{\A}(\S)$ (Lemma \ref{staExp}).} 



\begin{lm}
Given a $G$-dependent sample $\S$ of size $n$, 
and a $\beta_n$-uniformly stable learning algorithm $\A$.
Suppose the loss function $\ell$ is bounded by $M$.
Then for any $t>0$,
\[
\P( \Phi_{\A}(\S) - \E [\Phi_{\A}(\S)] \ge t )
\le \exp \left( - \frac{2n^2t^2}{\Lambda(G) (4n\beta_n + M)^2} \right).
\]
\label{stability concentration}
\end{lm}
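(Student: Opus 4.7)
The plan is to recognize that this lemma is a direct application of Corollary \ref{ie-dep-graph} (the McDiarmid-type concentration bound for general graph-dependent random variables with uniform Lipschitz constant) to the random variable $\Phi_{\A}(\S)$. The entire task therefore reduces to showing that, as a function of the $n$ data points, $\Phi_{\A}$ is $c$-Lipschitz with $c = (4n\beta_n + M)/n$; then the stated bound follows immediately, since substituting $c^2 = (4n\beta_n + M)^2 / n^2$ into the denominator $\Lambda(G) c^2$ produces exactly $\Lambda(G)(4n\beta_n + M)^2 / n^2$.

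First I would fix two samples $\S$ and $\S^{(i)}$ that agree in every coordinate except the $i$-th, and decompose
\begin{align*}
\Phi_{\A}(\S) - \Phi_{\A}(\S^{(i)})
= \bigl( R(f^{\A}_{\S}) - R(f^{\A}_{\S^{(i)}}) \bigr)
- \bigl( \RH_{\S}(f^{\A}_{\S}) - \RH_{\S^{(i)}}(f^{\A}_{\S^{(i)}}) \bigr).
\end{align*}
For the generalization-error term, I would insert the leave-one-out hypothesis $f^{\A}_{\S^{\setminus i}}$ and apply the triangle inequality together with uniform stability twice (once comparing $f^{\A}_{\S}$ to $f^{\A}_{\S^{\setminus i}}$ and once comparing $f^{\A}_{\S^{(i)}}$ to $f^{\A}_{\S^{\setminus i}}$), yielding the bound $2\beta_n$.

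For the empirical-error term, I would split the sum over indices into the $n-1$ positions where the two samples agree and the single position $i$ where they differ. On the $n-1$ matching positions, each summand $|\ell(y_j, f^{\A}_{\S}(x_j)) - \ell(y_j, f^{\A}_{\S^{(i)}}(x_j))|$ is again bounded by $2\beta_n$ via the same leave-one-out trick, contributing at most $\frac{n-1}{n} \cdot 2\beta_n \le 2\beta_n$. On the differing position, the two loss evaluations are both bounded by $M$, so their difference contributes at most $M/n$. Summing gives $|\Phi_{\A}(\S) - \Phi_{\A}(\S^{(i)})| \le 4\beta_n + M/n = (4n\beta_n + M)/n$, which establishes the Lipschitz property with the same constant for every coordinate.

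The only subtle step is the stability-based bound on the differences of losses evaluated with two different hypotheses at the same test point; the leave-one-out interpolation argument is standard but must be applied carefully in both the generalization and empirical parts. Once the Lipschitz constant is pinned down, Corollary \ref{ie-dep-graph} applied to $f = \Phi_{\A}$ with dependency graph $G$ closes the proof.
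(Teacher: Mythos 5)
Your proposal is correct and follows essentially the same route as the paper: the paper likewise isolates the bounded-difference property $|\Phi_{\A}(\S)-\Phi_{\A}(\S^i)|\le 4\beta_n + M/n$ (as Lemma~\ref{generalization error bound bounded difference}, proved via the same leave-one-out insertion of $f^{\A}_{\S^{\setminus i}}$ and the same split of the empirical sum into the $n-1$ agreeing indices and the single differing one) and then invokes Corollary~\ref{ie-dep-graph} with $c=(4n\beta_n+M)/n$. No gaps.
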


We prove the following lemma, which states that 
the Lipschitz coefficients of $\Phi_{\A}(\cdot)$ are all bounded by $4\beta_n + M/n$.
Then Lemma~\ref{stability concentration} follows from
Lemma~\ref{generalization error bound bounded difference}
and Theorem~\ref{ie-dep-graph},
since the Lipschitz coefficients are all of the same value.

\begin{lm}
Given a $\beta_n$-uniformly stable learning algorithm $\A$,
for any $\S, \S' \in (\mathcal X \times \mathcal Y)^n$ that differ only in one entry, 
we have
\[
\mid  \Phi_{\A}(\S) - \Phi_{\A}(\S') \mid
\,\le 4\beta_n + \frac{M}{n}.
\]
\label{generalization error bound bounded difference}
\end{lm}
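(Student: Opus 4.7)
The plan is to bound $|\Phi_{\A}(\S) - \Phi_{\A}(\S')|$ by splitting via the triangle inequality into a ``generalization-error'' part and an ``empirical-error'' part:
\begin{align*}
|\Phi_{\A}(\S) - \Phi_{\A}(\S')|
\le |R(f^{\A}_{\S}) - R(f^{\A}_{\S'})|
+ |\RH_{\S}(f^{\A}_{\S}) - \RH_{\S'}(f^{\A}_{\S'})|,
\end{align*}
and then to exploit uniform stability on each part separately. Let $i$ be the index on which $\S$ and $\S'$ disagree, and let $\S^{\setminus i}$ be their common subsample of size $n-1$.

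For the first piece, I would write $\ell(y, f^{\A}_{\S}(x)) - \ell(y, f^{\A}_{\S'}(x))$ as a telescoping difference through $f^{\A}_{\S^{\setminus i}}$. By $\beta_n$-uniform stability each of the two differences is bounded pointwise by $\beta_n$, so $|\ell(y, f^{\A}_{\S}(x)) - \ell(y, f^{\A}_{\S'}(x))| \le 2\beta_n$ for every $(x,y)$. Taking expectations under $\D$ gives $|R(f^{\A}_{\S}) - R(f^{\A}_{\S'})| \le 2\beta_n$.

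For the second piece I would split the empirical sum into the $n-1$ indices $j \ne i$, where the sample points coincide, and the single index $j = i$, where they differ:
\begin{align*}
\RH_{\S}(f^{\A}_{\S}) - \RH_{\S'}(f^{\A}_{\S'})
= \tfrac{1}{n} \sum_{j \ne i} \bigl[ \ell(y_j, f^{\A}_{\S}(x_j)) - \ell(y_j, f^{\A}_{\S'}(x_j)) \bigr]
+ \tfrac{1}{n} \bigl[ \ell(y_i, f^{\A}_{\S}(x_i)) - \ell(y'_i, f^{\A}_{\S'}(x'_i)) \bigr].
\end{align*}
Each of the $n-1$ summands in the first sum is again bounded by $2\beta_n$ via the same telescoping through $f^{\A}_{\S^{\setminus i}}$, contributing at most $\frac{n-1}{n}\cdot 2\beta_n \le 2\beta_n$. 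The residual term at index $i$ involves two loss values evaluated at genuinely different sample points, so uniform stability is not directly applicable; here I would simply use the boundedness hypothesis $\ell \le M$, which gives a contribution of at most $M/n$.

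Combining the two pieces yields $|\Phi_{\A}(\S) - \Phi_{\A}(\S')| \le 2\beta_n + 2\beta_n + M/n = 4\beta_n + M/n$, as required. The only subtlety worth flagging is the $i$-th empirical-error term: one must not try to apply stability across two different data points, and it is precisely this term that forces the $M/n$ additive contribution that appears in the final concentration bound through Theorem~\ref{ie-dep-graph}.
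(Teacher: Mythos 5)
Your proof is correct and follows essentially the same route as the paper's: a triangle-inequality split into the generalization-error and empirical-error parts, telescoping each comparison through the common leave-one-out sample $\S^{\setminus i}$ to apply uniform stability twice, and bounding the single mismatched empirical term at index $i$ by $M/n$ via boundedness of the loss. No gaps.
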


\begin{proof}
In the literature \cite{bousquet2002stability}, 
Lemma~\ref{generalization error bound bounded difference}
was proved for \red{the} i.i.d. case, actually,
the proof remains valid in our dependent setting.
Assume that $\S$ and $\S'$ differ only in $i$-th entry,
and denote $\S'$ as
\[
\S^i
:= ( (x_1, y_1), \ldots, (x_{i - 1}, y_{i - 1}), 
(x_i', y_i'), (x_{i + 1}, y_{i + 1}) \ldots, (x_m, y_m)),
\]
such that the marginal distribution of $(x_i', y_i')$ is also $\D$.

Notice that we do not require the data to be i.i.d., 
as samples are dependent,
and have the same marginal probability distribution $\D$.
To begin with, we bound $R (f^{\A}_{\S}) - R (f^{\A}_{\S^i})$ \redd{using the triangle inequality},

\begin{align*}
&\big\mid  R (f^{\A}_{\S}) - R (f^{\A}_{\S^i}) \big\mid 
\le \big\mid  R (f^{\A}_{\S}) - R ( f^{\A}_{\S^{\setminus i}} ) \big\mid  
+ \big\mid  R(f^{\A}_{\S^{\setminus i}}) - R (f^{\A}_{\S^i}) \big\mid  
\\
&= \big\mid  \E_{\D} [\ell(y, f^{\A}_{\S}(x))] - \E_{\D} [\ell(y, f^{\A}_{\S^{\setminus i}}(x))] \big\mid  
+ \big\mid  \E_{\D} [\ell(y, f^{\A}_{\S^{\setminus i}}(x))] - \E_{\D} [ \ell(y, f^{\A}_{\S^i}(x))] \big\mid 
\\
&= \big\mid  \E_{\D} [ \ell(y, f^{\A}_{\S}(x)) - \ell(y, f^{\A}_{\S^{\setminus i}}(x)) ] \big\mid  
+ \big\mid  \E_{\D} [\ell(y, f^{\A}_{\S^{\setminus i}}(x)) - \ell(y, f^{\A}_{\S^i}(x))] \big\mid  
\le 2\beta_n,
\end{align*}
\redd{where the last inequality is by the uniform stability defined by \eqref{stable}.}

Then we bound $\RH_\S (f^{\A}_{\S}) - \RH_{\S^i} (f^{\A}_{\S^i}) $,
\begin{align*}
&n \big\mid  \RH_\S (f^{\A}_{\S}) - \RH_{\S^i} (f^{\A}_{\S^i}) \big\mid  
= \left\mid  \sum_{ (x_j, y_j) \in \S } \ell(y_j, f^{\A}_{\S}(x_j))
- \sum_{ (x_j, y_j) \in \S^i } \ell(y_j, f^{\A}_{\S^i}(x_j)) \right\mid  \\
&\le \big\mid  \ell(y_i, f^{\A}_{\S}( x_i )) - \ell(y_i^\prime, f^{\A}_{\S^i}(x_i^\prime)) \big\mid 
+ \sum_{j\ne i} \left\mid \ell(y_j, f^{\A}_{\S}(x_j)) - \ell(y_j, f^{\A}_{\S^i}(x_j)) \right\mid  
\\
&\le \sum_{j\ne i} \left\mid \ell(y_j, f^{\A}_{\S}(x_j)) - \ell(y_j, f^{\A}_{\S^{\setminus i}}(x_j)) \right\mid  
+ \sum_{j\ne i} \left\mid  \ell(y_j, f^{\A}_{\S^{\setminus i}}(x_j)) - \ell(y_j, f^{\A}_{\S^i}(x_j)) \right\mid  \\ 
&\quad+ \big\mid  \ell(y_i, f^{\A}_{\S}( x_i )) - \ell(y_i^\prime, f^{\A}_{\S^i}(x_i^\prime)) \big\mid 
\,\le 2n\beta_n + M,
\end{align*}
\redd{where the last inequality 
is by the uniform stability 
and the assumption that $\ell$ is bounded by $M$.
}

Combining the above bounds,
by the triangle inequality, 
we have that
\begin{align*}
\big\mid  \Phi_{\A}(\S) - \Phi_{\A}(\S^i)\big\mid 
&= \big\mid  ( R (f^{\A}_{\S}) - \RH_\S (f^{\A}_{\S}) ) - ( R (f^{\A}_{\S^i}) - \RH_{\S^i} (f^{\A}_{\S^i}) ) \big\mid  \\
&\le \big\mid  R (f^{\A}_{\S}) - R (f^{\A}_{\S^i}) \mid  + \mid  \RH (f^{\A}_{\S}) - \RH_{\S^i} (f^{\A}_{\S^i}) \big\mid  
\le 4\beta_n + \dfrac{M}{n},
\end{align*}
\red{which} completes the proof.
\end{proof}


\red{We are now in measure to bound the expectation of $\Phi_{\A}(\S)$}.
\begin{lm}
Let $\S$ be a $G$-dependent sample of size $n$.
Suppose the maximum degree of $G$ is $\Delta = \Delta(G)$.
Let $\A$ be a $\beta_i$-uniformly stable learning algorithm for every $i \in [n-\Delta, n]$,
and $\beta_{n, \Delta} = \max_{i \in [0,\Delta]} \beta_{n-i}$. 
Then we have
\begin{align*}
\E [ \Phi_{\A}(\S) ] \le 2 \beta_{n, \Delta} (\Delta + 1).
\end{align*}
\label{staExp}
\end{lm}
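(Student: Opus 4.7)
The plan is to control the per-sample gap $\E_\S[R(f^\A_\S)] - \E_\S[\ell(y_i, f^\A_\S(x_i))]$ for each $i \in [n]$ by comparing to a hypothesis trained on a sub-sample that is independent of $(x_i, y_i)$, and then average over $i$. First I would write
\begin{align*}
\E[\Phi_\A(\S)]
= \E_\S[R(f^\A_\S)] - \frac{1}{n} \sum_{i=1}^n \E_\S[\ell(y_i, f^\A_\S(x_i))],
\end{align*}
and, for each $i$, let $N[i]$ denote the closed neighborhood of vertex $i$ in the dependency graph $G$, so that $|N[i]| \le \Delta + 1$, and let $\S^{\setminus N[i]}$ be the sub-sample obtained by deleting every entry indexed by $N[i]$.

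The crux is the observation that since no edge of $G$ joins $\{i\}$ to $[n]\setminus N[i]$, the definition of $G$-dependence implies that $(x_i, y_i)$ is independent of $\S^{\setminus N[i]}$. Combined with the fact that $(x_i, y_i)$ has marginal distribution $\D$, this yields
\begin{align*}
\E_\S[\ell(y_i, f^\A_{\S^{\setminus N[i]}}(x_i))] = \E_\S[R(f^\A_{\S^{\setminus N[i]}})].
\end{align*}
Next, I would peel off the at most $\Delta + 1$ entries of $N[i]$ one at a time using $\beta_j$-uniform stability, with each removal involving a sample size $j \in [n-\Delta, n]$, to obtain the pointwise bound
\begin{align*}
| \ell(y, f^\A_\S(x)) - \ell(y, f^\A_{\S^{\setminus N[i]}}(x)) | \le (\Delta + 1)\, \beta_{n, \Delta}
\qquad \text{for every } (x,y) \in \mathcal X \times \mathcal Y.
\end{align*}

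Applying this last inequality with $(x, y) = (x_i, y_i)$ and taking $\E_\S$, and separately integrating it against $\D$ to obtain $|R(f^\A_\S) - R(f^\A_{\S^{\setminus N[i]}})| \le (\Delta+1)\beta_{n,\Delta}$, then combining with the independence identity above through a triangle inequality, delivers
\begin{align*}
\bigl| \E_\S[R(f^\A_\S)] - \E_\S[\ell(y_i, f^\A_\S(x_i))] \bigr|
\le 2(\Delta+1)\,\beta_{n, \Delta}.
\end{align*}
Averaging over $i \in [n]$ then gives the claim. The only real subtlety is the independence step: it is essential to use the \emph{closed} neighborhood $N[i]$ as the exact set of entries whose removal decouples $(x_i, y_i)$ from the remaining sample—removing only $i$ itself would leave residual correlations through its $\Delta$ neighbors—and this is precisely where the dependence of the bound on $\Delta$ (rather than on $1$ as in the i.i.d. case) enters. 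The $|N[i]|$ successive stability applications that follow are then routine bookkeeping enabled by the definition of $\beta_{n,\Delta}$.
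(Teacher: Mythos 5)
Your proposal is correct and follows essentially the same route as the paper: the paper's Lemma \ref{pert} performs exactly your argument, telescoping over the successive removal of the entries of the closed neighborhood $N_G^+(i)$ (at most $\Delta+1$ stability applications, each costing $\beta_{n,\Delta}$) and then exploiting that $(x_i,y_i)$ is independent of, and identically distributed with, a fresh test point relative to the reduced sample $\S^{(i,n_i)}$, so the residual difference of expectations vanishes. Your identification of the closed neighborhood as the exact set whose removal decouples $(x_i,y_i)$ is precisely the paper's key step, and the final averaging over $i$ matches the paper's proof of Lemma \ref{staExp}.
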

The proof of the lemma is based on iterative perturbations of the training sample $\S$,
where a perturbation is essentially removing a data point from $\S$. 
The property of uniform stability of the algorithm guarantees that each perturbation causes a discrepancy up to $\beta_{n,\Delta}$, 
and in total $2(\Delta +1)$ perturbations have to be made to 
\textit{eliminate} the dependency between a data point and the others. 

We start with a technical lemma before the proof of Lemma~\ref{staExp}.
\begin{lm}
Under the same assumptions in Lemma \ref{staExp},
we have
\[
\max_{(x_i, y_i) \in \S}
\, \E_{(x, y), \S} [\ell(y, f^{\A}_{\S}(x)) - \ell(y_i, f^{\A}_{\S}( x_i ))]
\le 2 \beta_{n, \Delta}(\Delta+1).
\]
\label{pert}
\end{lm}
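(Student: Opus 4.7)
The plan is to decouple the fixed training point $(x_i, y_i)$ from the learned hypothesis $f^{\A}_{\S}$ by removing exactly those training points that $(x_i, y_i)$ can possibly depend on. Let $N[i] = \{i\} \cup \{j : \{i,j\} \in E(G)\}$ denote the closed neighborhood of $i$ in $G$, so $|N[i]| \le \Delta + 1$. By the dependency graph property applied to the disjoint non-adjacent sets $\{i\}$ and $[n] \setminus N[i]$, the variable $(x_i, y_i)$ is independent of the subsample $\S^{\setminus N[i]}$ obtained by deleting all entries indexed by $N[i]$. Consequently $f^{\A}_{\S^{\setminus N[i]}}$, being a function of $\S^{\setminus N[i]}$ alone, is independent of $(x_i, y_i)$, and by assumption on the test point also independent of $(x, y)$.

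I would then introduce the telescoping identity
\begin{align*}
\E[\ell(y, f^{\A}_{\S}(x)) - \ell(y_i, f^{\A}_{\S}(x_i))]
&= \E[\ell(y, f^{\A}_{\S}(x)) - \ell(y, f^{\A}_{\S^{\setminus N[i]}}(x))] \\
&\quad + \E[\ell(y, f^{\A}_{\S^{\setminus N[i]}}(x)) - \ell(y_i, f^{\A}_{\S^{\setminus N[i]}}(x_i))] \\
&\quad + \E[\ell(y_i, f^{\A}_{\S^{\setminus N[i]}}(x_i)) - \ell(y_i, f^{\A}_{\S}(x_i))].
\end{align*}
The middle term vanishes: conditioning on $\S^{\setminus N[i]}$, both $(x,y)$ and $(x_i, y_i)$ have marginal $\D$ and are independent of $f^{\A}_{\S^{\setminus N[i]}}$, so both conditional expectations equal $R(f^{\A}_{\S^{\setminus N[i]}})$. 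For the outer two terms, I would enumerate $N[i] = \{i_1, \ldots, i_k\}$ with $k \le \Delta + 1$ and interpolate the intermediate samples $\S = \S_0 \supset \S_1 \supset \cdots \supset \S_k = \S^{\setminus N[i]}$, where $\S_j$ is obtained from $\S_{j-1}$ by deleting the entry indexed by $i_j$. A triangle inequality together with $\beta_m$-uniform stability of $\A$ at each sample size $m \in [n-\Delta, n]$ bounds each one-step jump by $\beta_m \le \beta_{n,\Delta}$, so each outer term is at most $(\Delta+1)\beta_{n,\Delta}$.

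Summing the three contributions yields the advertised bound $2(\Delta+1)\beta_{n,\Delta}$. The only delicate point is the vanishing of the middle term, which is exactly where the graph-dependence hypothesis is used; once one spots that removing the closed neighborhood $N[i]$ is the right decoupling operation, the remainder is a routine stability telescope and elementary conditioning argument.
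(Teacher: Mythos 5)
Your proposal is correct and follows essentially the same route as the paper's proof: remove the closed neighborhood $N_G(i)\cup\{i\}$ one entry at a time, pay $\beta_{n,\Delta}$ per removal for each of the two loss terms (giving $2(\Delta+1)\beta_{n,\Delta}$ in total), and observe that the residual expectation vanishes because $(x_i,y_i)$ and $(x,y)$ are both independent of the reduced sample and share the marginal $\D$. The only cosmetic difference is that you organize the computation as a three-term split with the middle term vanishing, whereas the paper telescopes the two loss evaluations separately and cancels the residual at the end.
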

\begin{proof}
For every $i\in [n]$, 
let $N_G(i)$ be the set of vertices adjacent to $i$ in graph $G$,
and suppose $N_G^+(i) = N_G(i) \cup \{ i \} = \{j_1,\ldots,j_{n_i}\}$ with $j_{k-1}>j_k$. 
Define $\S^{(i,0)}=\S$ and for every $k \in [n_i]$, 
let
$\S^{(i,k)}$ be obtained from $\S^{(i,k-1)}$ by removing the $j_k$-th entry. 
By the uniform stability of $\A$, 
for any $(x,y)\in \mathcal X \times\mathcal Y$ and $k\in [n_i]$, 
we have
\begin{align*}
\big\mid \ell(y, f^{\A}_{\S^{(i,k-1)}}(x))-\ell(y, f^{\A}_{\S^{(i,k)}}(x))\big\mid 
\le \beta_{n, \Delta}.
\end{align*}

By a decomposition using a telescoping summation,
\begin{align*}
\ell(y, f^{\A}_{\S}(x)) 
= \sum_{k=1}^{n_i} (\ell(y, f^{\A}_{\S^{(i,k-1)}}(x))-\ell(y, f^{\A}_{\S^{(i,k)}}(x)) 
+ \ell(y, f^{\A}_{\S^{(i,n_i)}}(x)).
\end{align*}
Similarly, we also get
\begin{align*}
\ell(y_i, f^{\A}_{\S}( x_i )) 
= \sum_{k=1}^{n_i} (\ell(y_i, f^{\A}_{\S^{(i,k-1)}}( x_i ))-\ell(y_i, f^{\A}_{\S^{(i,k)}}( x_i )) 
+ \ell(y_i, f^{\A}_{\S^{(i,n_i)}}( x_i )).
\end{align*}
Now we are ready to bound the difference
\begin{align*}
&\quad\ell(y, f^{\A}_{\S}(x)) - \ell(y_i, f^{\A}_{\S}( x_i ))  \\
&= \sum_{k=1}^{n_i} \left( (\ell(y, f^{\A}_{\S^{(i,k-1)}}(x))-\ell(y, f^{\A}_{\S^{(i,k)}}(x)))
- (\ell(y_i, f^{\A}_{\S^{(i,k)}}( x_i ))-\ell(y_i, f^{\A}_{\S^{(i,k-1)}}( x_i ))) \right)  \\
&\quad+ \ell(y, f^{\A}_{\S^{(i,n_i)}}(x))-\ell(y_i, f^{\A}_{\S^{(i,n_i)}}( x_i )) \\
&\le \sum_{k=1}^{n_i} \mid  \ell(y, f^{\A}_{\S^{(i,k-1)}}(x))-\ell(y, f^{\A}_{\S^{(i,k)}}(x)) \mid  \\
&\quad+ \sum_{k=1}^{n_i} \mid  \ell(y_i, f^{\A}_{\S^{(i,k)}}( x_i ))-\ell(y_i, f^{\A}_{\S^{(i,k-1)}}( x_i )) \mid 
+ \ell(y, f^{\A}_{\S^{(i,n_i)}}(x))-\ell(y_i, f^{\A}_{\S^{(i,n_i)}}( x_i ))\\
&\le 2 n_i\beta_{n, \Delta} + \ell(y, f^{\A}_{\S^{(i,n_i)}}(x))-\ell(y_i, f^{\A}_{\S^{(i,n_i)}}( x_i )).
\end{align*}
\redd{Therefore, by noting that 
$n_i = \mid N_G^+(i) \mid\,\le \Delta+1$ for all $i$, we have}
\begin{align*}
&\E_{\S, (x, y)} [\ell(y, f^{\A}_{\S}(x)) - \ell(y_i, f^{\A}_{\S}( x_i ))] \\
&\le \E_{\S, (x, y)} [ \ell(y, f^{\A}_{\S^{(i,n_i)}}(x)) - \ell(y_i, f^{\A}_{\S^{(i,n_i)}}( x_i )) ] 
+ 2 n_i \beta_{n, \Delta} \\
&\le \E_{\S, (x, y)} [ \ell(y, f^{\A}_{\S^{(i,n_i)}}(x)) - \ell(y_i, f^{\A}_{\S^{(i,n_i)}}( x_i )) ] 
+ 2 \beta_{n, \Delta}(\Delta+1) \\
&= \E_{\S, (x, y)} [ \ell(y, f^{\A}_{\S^{(i,n_i)}}(x)) ] 
- \E_\S [\ell(y_i, f^{\A}_{\S^{(i,n_i)}}( x_i )) ]
+ 2 \beta_{n, \Delta}(\Delta+1) \\
&= \E_{\S^{(i,n_i)}, (x, y)} [ \ell(y, f^{\A}_{\S^{(i,n_i)}}(x)) ]
- \E_{\S^{(i,n_i)},(x_i,y_i)} [\ell(y_i, f^{\A}_{\S^{(i,n_i)}}( x_i )) ] 
+ 2 \beta_{n, \Delta}(\Delta+1) \\
&= 2 \beta_{n, \Delta}(\Delta+1),
\end{align*}
where the last equality is because
$(x_i, y_i)$ and $(x, y)$ are independent of $\S^{(i,n_i)}$, and have the same distribution.
\end{proof}

Now we are ready to prove Lemma~\ref{staExp}.
\begin{proof}[Proof of Lemma~\ref{staExp}]
From the definition of $\Phi_{\A}(\S)$ in \eqref{phi}, we have
\begin{align*}
\E_{\S} [ \Phi_{\A}(\S) ]
&= \E_{\S} 
\left[ \E_{(x, y)} [\ell(y, f^{\A}_{\S}(x))] - \frac{1}{n} \sum_{i = 1}^n \ell(y_i, f^{\A}_{\S}( x_i )) \right] \\
&= \frac{1}{n} \sum_{i = 1}^n \E_{\S, (x, y)}
[ \ell(y, f^{\A}_{\S}(x)) - \ell(y_i, f^{\A}_{\S}( x_i )) ] 
\le 2 \beta_{n, \Delta}(\Delta+1),
\end{align*}
where the last inequality is by Lemma \ref{pert}.
\end{proof}

Combining Lemma~\ref{stability concentration} and Lemma~\ref{staExp}
gives the following theorem, 
\red{which upper-bounds the generalization error of learning algorithms trained over $G$-dependent training sets of size $n$.}
\begin{tm}
Let $\S$ be a sample of size $n$ with dependency graph $G$.
Suppose the maximum degree of $G$ is $\Delta$.
Assume that the learning algorithm $\A$ is $\beta_i$-uniformly stable for all $i \in [n - \Delta, n]$.
Suppose the 
loss function $\ell$ is bounded by $M$. 
Let $\beta_{n, \Delta} = \max_{i \in [0,\Delta]} \beta_{n-i}$.
For any $\delta \in (0, 1)$, with probability at least $1 - \delta$,
it holds that
\begin{align*}
R(f^{\A}_{\S}) 
\le \RH_\S (f^{\A}_{\S}) + 2 \beta_{n, \Delta} (\Delta + 1)
+ \frac{4n\beta_n + M}{n} \sqrt{\dfrac{\Lambda(G)}{2} \Log{\dfrac{1}{\delta}} }.
\end{align*}
\label{stabBounds}
\end{tm}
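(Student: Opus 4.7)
The plan is the direct combination of Lemma \ref{stability concentration} and Lemma \ref{staExp} that the surrounding text flags, so the proof reduces to inverting the concentration tail and substituting the expectation bound. First, I would invoke Lemma \ref{stability concentration} and solve for the deviation level corresponding to confidence $1-\delta$. Setting
$$\Exp{ -\dfrac{2n^2 t^2}{\Lambda(G)(4n\beta_n + M)^2} } = \delta$$
and inverting gives
$$t = \dfrac{4n\beta_n + M}{n}\sqrt{\dfrac{\Lambda(G)}{2}\Log{\dfrac{1}{\delta}}}.$$
Hence, with probability at least $1 - \delta$, $\Phi_{\A}(\S) \le \E[\Phi_{\A}(\S)] + t$.

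Next, I would substitute the bound $\E[\Phi_{\A}(\S)] \le 2\beta_{n,\Delta}(\Delta+1)$ supplied by Lemma \ref{staExp}; this is available precisely because the theorem assumes $\beta_i$-uniform stability of $\A$ for every $i \in [n - \Delta, n]$. Recalling the definition $\Phi_{\A}(\S) = R(f^{\A}_{\S}) - \RH(f^{\A}_{\S})$ from \eqref{phi} and rearranging yields the stated high-probability upper bound on $R(f^{\A}_{\S})$.

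There is essentially no obstacle remaining, since all of the technical content was already absorbed into the two lemmas: the McDiarmid-type concentration for general graph-dependence via the forest complexity $\Lambda(G)$, together with the Lipschitz estimate $|\Phi_{\A}(\S) - \Phi_{\A}(\S')| \le 4\beta_n + M/n$ established in Lemma \ref{generalization error bound bounded difference}, and the iterative-perturbation argument of Lemma \ref{staExp} that converts uniform stability into an $O(\beta_{n,\Delta}(\Delta+1))$ bias term. The only point that merits a quick sanity check is that the range $[n-\Delta, n]$ over which stability is assumed exactly matches what Lemma \ref{staExp} consumes, as its telescoping deletes up to $\Delta + 1$ points from $\S$ (one per vertex in the closed neighbourhood $N_G^+(i)$) and therefore invokes the stability hypothesis at sample sizes as small as $n - \Delta$.
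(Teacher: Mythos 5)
Your proposal is correct and matches the paper's argument exactly: the paper likewise obtains the theorem by combining Lemma \ref{stability concentration} (inverting the tail at level $\delta$, which gives precisely the deviation $t = \frac{4n\beta_n+M}{n}\sqrt{\frac{\Lambda(G)}{2}\log(1/\delta)}$) with the expectation bound of Lemma \ref{staExp}, and your check that the stability range $[n-\Delta,n]$ is exactly what the telescoping in Lemma \ref{pert} requires is accurate. Nothing is missing.
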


\begin{re}
It is well known that for many learning algorithms, $\beta_n=O(1/n)$
(see, for example, \citealt{bousquet2002stability}),
in this case, we have that
$\beta_{n, \Delta} (\Delta + 1) 
\le \beta_{n-\Delta} (\Delta + 1) = O(\frac{\Delta}{n - \Delta})$,
which vanishes asymptotically if $\Delta = o(n)$.
The term $O\left(\sqrt{\Lambda(G)}/n\right)$ also vanishes asymptotically if 
$\Lambda(G)=o(n^2)$. 
\red{We also observe that if the training data are i.i.d., 
Theorem~\ref{stabBounds} degenerates to the standard stability bound obtained in~\cite{bousquet2002stability}, 
by setting $\Delta = 0$, $\beta_{n, \Delta} = \beta_n$, and $\Lambda(G) = n$.}

\label{gen bound analysis}
\end{re}

\section{Applications}
\label{sec:4}
\mra{In this section, we present three practical 
applications related to learning with interdependent data, 
for which we use the methodology presented in the previous sections 
to derive generalization bounds.}

\subsection{Bipartite ranking}

The goal of bipartite ranking is to assign higher scores to instances of the positive class than the ones of the negative class \citep{Freund03,Agarwal09}. This framework corresponds to many applications of information retrieval such as recommender systems \citep{Sidana21}, and uplift-modeling \citep{Betlei21}, etc.

It has attracted a lot of interest in recent years 
since the empirical ranking error of a scoring function $h:\mathcal{X}\rightarrow \mathbb{R}$ 
over a training set $T :=\left(x_i,y_i\right)_{1\le i\le m}$ with $y_i\in\{-1,+1\}$ defined by
\begin{equation}
\label{eq:BiparLoss}
\widehat{\mathcal{L}}_T(h)=\frac{1}{m_{-}m_{+}} \sum_{i:y_i=1}\sum_{j:y_j=-1}\I{h( x_i )\le h(x_j)},
\end{equation}
is equal to one minus the Area Under the ROC Curve (AUC) of $h$ 
(see, for example, \citealt{Cortes04}),
where $m_- :=\sum_{i=1}^m \I{y_i=-1}$ and $m_+ :=\sum_{i=1}^m \I{y_i=1}$ 
are the number of negative and positive instances in the training set $T$ respectively.

For two instances of different classes $(x,y), (x',y')$ 
in $T$ such that $y\neq y'$,
by considering the (unordered) pairs of examples 
$\{ (x,y), (x',y') \}$,
and the classifier of pairs $f$ associated to a scoring function $h$ defined by
\[
f(x,x')=h(x)-h(x'),
\]
\redd{we can rewrite the bipartite ranking loss \eqref{eq:BiparLoss} of $h$ over $T$ as the classification error of the associated $f$ over the pairs of instances of different classes,
\begin{equation}
\label{eq:BiparLoss2}
\widehat{R}_{\S}(f)
= \widehat{\mathcal{L}}_T(h)
= \frac{1}{n} \sum_{\{ (x, y), (x', y') \} \in\S}\I{z_{y,y'}f(x,x')\le 0},
\end{equation}
where $n=m_{-}m_{+}$, 
\begin{align*}
\S := \left\{ (x, y), (x',y') : 
(x, y) \in T, (x',y') \in T, y\neq y' \right\}    
\end{align*}
is the set of $n$ unordered pairs of examples from different classes in $T$, 
and 
\begin{align*}
    z_{y,y'} :=2\I{y-y'>0}-1.
\end{align*}
Note that $z_{1,-1} = 1$ and $z_{-1,1} = -1$.
}

Let 
\begin{align*}
T^+ := \{ ( x^+_i, 1): i\in[m_+]\} 
\quad\mbox{and}\quad
T^- := \{ (x^-_j, -1): j\in[m_-]\}
\end{align*}
be the sets of positive and negative instances of $T$ respectively. 
Then $T=T^+\cup T^-$. 
Without loss of generality, we assume that $m_+ \le m_-$, 
which corresponds to the usual situation in information retrieval, 
where there are fewer positive (relevant) instances than negative (irrelevant) ones. 

In this case, the independent covers of 
the corresponding dependency graph of $\S$ is
$\{ ( I_k, 1 ) \}_{k\in\{1,\ldots,m_-\}}$, where
\[
I_k = 
\left\{\left(x_i^+,x^-_{\sigma_{k,m_-}(i)}\right)
: i\in [m_+]\right\}, 
\]
with $\sigma_{k,m_-}$ denoting the permutation that is defined by
\begin{align*}
\sigma_{k,m_-}(i)=\begin{cases}(k+i-1) (\mathrm{mod}\ m_-), &\text{if } (k+i-1)(\mathrm{mod}\ m_-)\neq 0 \\
m_-, & \text{otherwise.}\end{cases}
\end{align*}

Figure \ref{fig:bipartRank} illustrates 
the dependency graph of a bipartite ranking problem with $m_+=2$ positive examples and $m_-=3$ negative instances 
as well as its corresponding independent covers represented by dotted ellipsoids.

\begin{figure}[H]
\centering
\scalebox{.75}{
\begin{tikzpicture}
\node[rectangle,draw=black,fill=white, minimum size=1.4cm] (11) at (-10, -1) {$x_2^+$};
\node[rectangle,draw=black,fill=white, minimum size=1.4cm] (12) at (-10, 1) {$x_1^+$};

\node[circle,draw=black,fill=white, minimum size=1.6cm] (21) at (-7.5, -2) {$x_3^-$};
\node[circle,draw=black,fill=white, minimum size=1.6cm] (22) at (-7.5, 0) {$x_2^-$};
\node[circle,draw=black,fill=white, minimum size=1.6cm] (23) at (-7.5, 2) {$x_1^-$};

\draw (11) -- (21);
\draw (11) -- (22);
\draw (11) -- (23);

\draw (12) -- (21);
\draw (12) -- (22);
\draw (12) -- (23);

\node[] at (-4.2,3.5) {{\large$I_1$}};
\node[] at (5.5,-2.5) {{\large$I_2$}};
\node[] at (-5.5,-2.5) {{\large$I_3$}};

\node[circle,draw=black,fill=white] (x12) at (0: 4cm) {$(x_1^+, x_2^-)$};
\node[circle,draw=black,fill=white] (x11) at (120: 4cm) {$(x_1^+, x_1^-)$};
\node[circle,draw=black,fill=white] (x13) at (240: 4cm) {$(x_1^+, x_3^-)$};

\node[circle,draw=black,fill=white] (x21) at (180: 4cm) {$(x_2^+, x_1^-)$};
\node[circle,draw=black,fill=white] (x23) at (300: 4cm) {$(x_2^+, x_3^-)$};
\node[circle,draw=black,fill=white] (x22) at (60: 4cm) {$(x_2^+, x_2^-)$};

\draw (x11) -- (x12);
\draw (x11) -- (x13);
\draw (x12) -- (x13);

\draw (x21) -- (x22);
\draw (x21) -- (x23);
\draw (x22) -- (x23);

\draw (x21) -- (x11);
\draw (x22) -- (x12);
\draw (x23) -- (x13);

\node[line width=0.5mm, dotted, ellipse, draw=black, rotate=0, scale=.9] [fit=(x11) (x22)] {};
\node[line width=0.5mm, dotted, ellipse, draw=black, rotate=-30, scale=.78] [fit=(x12) (x23)] {};
\node[line width=0.5mm, dotted, ellipse, draw=black, rotate=30, scale=.78] [fit=(x21) (x13)] {};

\end{tikzpicture}
}

\caption{The graph on the right
is a dependency graph corresponding to a bipartite ranking problem with $m_+=2$ positive examples $T^+=\{ (x_1^+, 1), (x_2^+, 1) \}$; and $m_-=3$ negative ones, $T^-=\{ (x_1^-, -1), (x_2^-, -1), (x_3^-, -1)\}$. 
Each pair of examples from different classes 
corresponds to an edge of the complete bipartite graph $K_{2, 3}$ on the left,
and is represented by a vertex of the dependency graph on the right.
Two pairs are adjacent in the dependency graph if they have an example in common.
Fractional independent covers $\{ (I_k, 1) \}_{1\le k\le 3}$ are shown by dotted ellipsoids.}
\label{fig:bipartRank}
\end{figure}

\rui{
\begin{re}
In the bipartite ranking,
the dependent pairs of instances correspond to
the edges of a complete bipartite graph $K_{m_+, m_-}$,
since pairs are chosen with one positive instance and one negative instance, see Figure \ref{fig:bipartRank} for illustration.

Given a graph $G$, the line graph of $G$ has the edges of $G$ as its vertices,
with two vertices adjacent if the corresponding edges have a vertex in common in $G$.
Then the dependency graph for pairs $\S$
is the line graph of $K_{m_+, m_-}$, 
known as an $m_+ \times m_-$ Rook's graph,
which is a Cartesian product of two complete graphs.
\end{re}
}

\redd{For bipartite ranking, 
it is easy to check that
\begin{align*}
\frac{\chi_f}{n}
= 
\frac{\max(m_-,m_+)}{ m_-m_+ }
= \frac{1}{ \min(m_-,m_+) }.    
\end{align*}}
Therefore by Theorem \ref{thm:FracRad}, Theorem \ref{thm:GenBounds},
and Ledoux and Talagrand's contraction lemma \citep[p.78 Corollary 3.17]{LT91} 
that can be extended to fractional Rademacher complexities 
giving 
$\widehat{\mathfrak{R}}^\star_\S(\mathcal{\ell\circ\mathcal{F}})=2\widehat{\mathfrak{R}}^\star_S(\mathcal{\mathcal{F}})$, 
we can bound the generalization error of bipartite ranking as follows.

\begin{co}
Let $T$ be a training set composed of $m_+$ positive instances and $m_-$ negative ones; and $\S$ the set of unordered pairs of examples from different classes in $T$.
Then for any scoring function from 
$\F=\{f:(x,x')\mapsto \langle \mathbf{w},\phi(x)-\phi(x')\rangle: \|\mathbf{w}\|\le B\}$, 
where $\phi$ is a feature mapping with bounded norm such that $\|\phi(x)-\phi(x')\|\le \Gamma$ for all $(x,x')$,
and for any $\delta\in (0,1)$, 
with probability at least $1-\delta$, we have 
\[
R(f) \le \RH_\S (f) + \frac{4B\Gamma}{\sqrt{m}}
+ 3\sqrt{\dfrac{ 1}{2m} \Log{\dfrac{2}{\delta} } },
\]
where $m = \min(m_-,m_+)$.
\label{cor:bipartite}
\end{co}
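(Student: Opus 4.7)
The plan is to assemble Corollary \ref{cor:bipartite} as a direct application of the empirical fractional Rademacher generalization bound \eqref{eq:EmpFracRadGenBounds} from Theorem \ref{thm:GenBounds}, specialized to the bipartite ranking setting. The loss in \eqref{eq:BiparLoss2} is the $0/1$ loss applied to the margin $z_{x,x'} f(x,x')$, so the loss is bounded by $M=1$. The sample size is $n=m_+ m_-$ and the dependency graph $G$ is the Rook's graph described after the figure. The text asserts that its fractional chromatic number is $\chi_f(G)=\max(m_+, m_-)$, which is the key graph-theoretic input, and gives the useful identity $\chi_f(G)/n = 1/\min(m_+, m_-) = 1/m$.

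With these ingredients, the bound \eqref{eq:EmpFracRadGenBounds} already delivers the last term of the corollary directly: plugging $M=1$, $\chi_f(G)=\max(m_+,m_-)$, and $n=m_+ m_-$ into $3M\sqrt{\frac{\chi_f(G)}{2n}\Log{\frac{2}{\delta}}}$ produces $3\sqrt{\frac{1}{2m}\Log{\frac{2}{\delta}}}$. So the only remaining piece is to control the complexity term $2\widehat{\mathfrak R}_\S^{\star}(\ell\circ\F)$ by $4B\Gamma/\sqrt{m}$.

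For that, I would proceed in two steps. First, invoke the fractional version of Ledoux--Talagrand's contraction lemma stated in the paragraph before the corollary: since the $0/1$ ranking loss can be handled by the standard contraction argument (applied independently inside each independent bag $I_k$ of the fractional cover), one obtains $\widehat{\mathfrak R}_\S^{\star}(\ell\circ\F)\le 2\widehat{\mathfrak R}_\S^{\star}(\F)$. Second, apply the theorem on the empirical fractional Rademacher complexity of linear classes (proved earlier in the paper via Cauchy--Schwarz and Jensen) to the class $\F=\{(x,x')\mapsto\langle \mathbf{w}, \phi(x)-\phi(x')\rangle : \|\mathbf{w}\|\le B\}$ with effective feature map $\Phi(x,x')=\phi(x)-\phi(x')$, which satisfies $\|\Phi(x,x')\|\le \Gamma$. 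That yields
\begin{equation*}
\widehat{\mathfrak R}_\S^{\star}(\F)\le B\Gamma\sqrt{\frac{\chi_f(G)}{n}}=\frac{B\Gamma}{\sqrt{m}}.
\end{equation*}
Combining the two steps gives $2\widehat{\mathfrak R}_\S^{\star}(\ell\circ\F)\le 4B\Gamma/\sqrt{m}$, which is precisely the middle term of the claimed bound.

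The main non-obstacle is really the identification $\chi_f(G)/n=1/m$; the Rook's-graph computation of $\chi_f$ is the one place where one must be careful, but the paper takes this as given in its discussion of the dependency structure, and the fractional cover $\{(I_k,1)\}_{1\le k\le m_-}$ exhibited in Figure \ref{fig:bipartRank} already attains $\sum_k w_k=\max(m_+, m_-)$. Once this is in hand, the remainder is bookkeeping: assemble \eqref{eq:EmpFracRadGenBounds}, substitute the Rademacher estimate and $M=1$, and the corollary follows.
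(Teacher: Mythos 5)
Your proposal is correct and follows essentially the same route the paper sketches in the paragraph preceding the corollary: apply the empirical bound \eqref{eq:EmpFracRadGenBounds} with $M=1$, use the (fractional) Ledoux--Talagrand contraction to reduce $\widehat{\mathfrak R}_\S^{\star}(\ell\circ\F)$ to $2\widehat{\mathfrak R}_\S^{\star}(\F)$, bound the latter by $B\Gamma\sqrt{\chi_f(G)/n}$ via the linear-class theorem with feature map $\phi(x)-\phi(x')$, and substitute $\chi_f(G)/n=\max(m_+,m_-)/(m_+m_-)=1/m$. The bookkeeping matches the stated constants exactly.
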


\comment{
\textcolor{red}{If you would like, I can draw this.
You can just draw one on paper using pen 
and include a picture of it.}
\mra{

\begin{figure}[htb] 
\centering
\begin{tikzpicture}[scale=.8]
\node[] (1) at (-2.7, 0) {$\Big($};
\node[] (2) at (-0.58, 0) {$\Big)$};
\node[] (3) at (-1.62, -0.3) {$,$};
\node[circle,draw=black,fill=white,minimum size=0.5mm] at (-2.2, 0) {\small  $2$};
\node[rectangle,draw=black,fill=white,minimum size=0.55cm]  at (-1.1, 0) {\small  $2$};

\node[] (1) at (0, -2) {$\Big($};
\node[] (2) at (2.18, -2) {$\Big)$};
\node[] (3) at (1.05, -2.3) {$,$};
\node[circle,draw=black,fill=white,minimum size=0.5mm]  at (0.5, -2) {\small $2$};
\node[rectangle,draw=black,fill=white,minimum size=0.55cm]  at (1.6, -2) {\small $1$};

\node[] (A) at (-1.7,-0.3) {$ $};
\node[] (B) at (0,-1.75) {$ $};
\draw (A) -- (B);

\node[] (C) at (-1.7,-3.7) {$ $};
\node[] (D) at (0,-2.1) {$ $};
\draw (C) -- (D);
\draw (A) -- (C);

\node[] (E) at (7.05,-0.3) {$ $};
\node[] (F) at (5.2,-1.75) {$ $};
\draw (E) -- (F);

\node[] (G) at (7.05,-3.7) {$ $};
\node[] (H) at (5.2,-2.1) {$ $};
\draw (G) -- (H);
\draw (E) -- (G);

\node[] (I) at (2.15,-2) {$ $};
\node[] (J) at (3,-2) {$ $};
\draw (I) -- (J);

\node[] (1) at (6, 0) {$\Big($};
\node[] (2) at (8.18, 0) {$\Big)$};
\node[] (3) at (7.05, -0.3) {$,$};
\node[circle,draw=black,fill=white,minimum size=0.5mm] at (6.5, 0) {\small  $1$};
\node[rectangle,draw=black,fill=white,minimum size=0.55cm]  at (7.6, 0) {\small  $2$};

\node[] (1) at (3, -2) {$\Big($};
\node[] (2) at (5.18, -2) {$\Big)$};
\node[] (3) at (4.05, -2.3) {$,$};
\node[circle,draw=black,fill=white,minimum size=0.5mm]  at (3.5, -2) {\small $1$};
\node[rectangle,draw=black,fill=white,minimum size=0.55cm]  at (4.6, -2) {\small $1$};

\node[] (1) at (-2.7, -4) {$\Big($};
\node[] (2) at (-0.58, -4) {$\Big)$};
\node[] (3) at (-1.62, -4.3) {$,$};
\node[circle,draw=black,fill=white,minimum size=0.5mm] at (-2.2, -4) {\small  $2$};
\node[rectangle,draw=black,fill=white,minimum size=0.55cm]  at (-1.1, -4) {\small  $3$};
\node[] (1) at (6, -4) {$\Big($};
\node[] (2) at (8.18, -4) {$\Big)$};
\node[] (3) at (7.05, -4.3) {$,$};
\node[circle,draw=black,fill=white,minimum size=0.5mm] at (6.5, -4) {\small  $1$};
\node[rectangle,draw=black,fill=white,minimum size=0.55cm]  at (7.6, -4) {\small  $3$};

\begin{pgfonlayer}{background}
\draw[edge] (3, 0) edge (8, 0);
\draw[edge] (0.15, -2) edge (2, -2);
\draw[edge] (2, -2) edge (3,0);

\draw[edge] (-2.6, -4) edge (2, -4);
\draw[edge] (2.9, -2) edge (5.1, -2);
\draw[edge] (2,-4) edge (2.9,-2);

\draw[edge] (-2.6, 0) edge (2, 0);
\draw[edge] (2.9, -4) edge (8, -4);
\draw[edge] (2,0) edge (2.9,-4);
\end{pgfonlayer}

\end{tikzpicture}

\end{figure}

}}

\subsection{Multi-class classification}

We now address the problem of mono-label multi-class classification, 
where the output space is a discrete set of labels  $\mathcal{Y} = [K]$ with $K$ classes. 
For the sake of presentation, we denote an element of $\mathcal{X} \times \mathcal{Y}$ as $x^y := (x, y)$. 
For a class of predictor functions $\mathcal{H} = \{h : \mathcal{X} \times \mathcal{Y} \rightarrow \R\}$, 
let $\ell$ be the instantaneous loss of $h\in \mathcal{H}$ on example $x^y$ defined by
\[
\ell(y,h(x^y))=\frac{1}{K-1}
\sum_{y'\in \mathcal{Y}\setminus \{y\}}\I{h(x^y)\le h(x^{y'})}.
\]
\red{For any sample $x$, this loss function 
is the average number of classes, 
for which $h$ assigns a higher score to the pairs constituted by $x$ and any other classes
that are not the true class of $x$.}
For a training set $T=\left(x_i^{y_i} \right)_{1\le i\le m}$ of size $m$, 
the corresponding empirical error of a function $h\in \mathcal{H}$ is
\begin{align}
\widehat{\mathcal{L}}_T(h)
= \frac{1}{m(K-1)}\sum_{i=1}^m\sum_{y'\in \mathcal{Y}\setminus \{y_i\}}\I{h(x_i^{y_i})\le h(x_i^{y'})}.
\label{mccLoss}
\end{align}
Many multi-class classification algorithms like Adaboost.MR \citep{Schapire99} or the multiclass SVM \citep{weston98} aim to minimize a convex surrogate function of this loss.

Similar to the bipartite ranking case, by considering pairs $(x^y,x^{y'})$ with $y'\in\mathcal{Y}\setminus \{y\}$, 
constituted by the pairs $x^y$ of an example and its class,
and the pairs $x^{y'}$ of the same examples with all other classes,
the classifier of pairs $f$ associated to a function $h\in\mathcal{H}$
is defined by
\[
f(x^y,x^{y'})=h(x^y)-h(x^{y'}).
\]
Then the empirical loss of a function $h$ over $T$, 
can be written as the classification error of the associated $f$,
\begin{equation}
\label{eq:mulLoss2}
\widehat{R}_{\S}(f)
= \widehat{\mathcal{L}}_T(h)
= \frac{1}{n} \sum_{(x^y,x^{y'})\in\S}\I{z_{y,y'}f(x^y,x^{y'})\le 0},
\end{equation}
where $\S=\{(x^y,x^{y'}) :  x^y\in T, x^{y'} \in T, y\neq y'\}$
is of size $n=m(K-1)$, 
and $z_{y,y'}=2\I{y>y'}-1$. 
In this case, an independent cover of the corresponding dependency graph of $\S$ could be
$\{ ( I_k, 1 ) \}_{ k\in\{1,\ldots,K-1\} }$, where
\[
I_k=\left\{\left(x_i^1,x_i^{k+1}\right): i\in [m]\right\},
\]
with the corresponding fractional chromatic number $\chi_f=K-1$.

\begin{figure}[htb] 
\centering
\scalebox{0.7}{
\begin{tikzpicture}

\node[] at (-2.2,2) {{\large$I_1$}};
\node[] at (-.8,0) {{\large$I_2$}};
\node[] at (-2.2,-2) {{\large$I_3$}};

\node[circle,draw=black,fill=white,inner sep=0pt] (x11) at (0.25, 2) {$(x_1^1, x_1^2)$};
\node[circle,draw=black,fill=white, inner sep=0pt] (x12) at (2, 0) {$(x_1^1, x_1^3)$};
\node[circle,draw=black,fill=white, inner sep=0pt] (x13) at (0.25, -2) {$(x_1^1, x_1^4)$};

\node[circle,draw=black,fill=white, inner sep=0pt] (x21) at (3.75, 2) {$(x_2^1, x_2^2)$};
\node[circle,draw=black,fill=white, inner sep=0pt] (x22) at (5.75, 0) {$(x_2^1, x_2^3)$};
\node[circle,draw=black,fill=white, inner sep=0pt] (x23) at (3.75, -2) {$(x_2^1, x_2^4)$};

\node[circle,draw=black,fill=white, inner sep=0pt] (x31) at (7.75, 2) {$(x_3^1, x_3^2)$};
\node[circle,draw=black,fill=white, inner sep=0pt] (x32) at (9.75, 0) {$(x_3^1, x_3^3)$};
\node[circle,draw=black,fill=white, inner sep=0pt] (x33) at (7.75, -2) {$(x_3^1, x_3^4)$};

\node[circle,draw=black,fill=white, inner sep=0pt] (x41) at (11.75, 2) {$(x_4^1, x_4^2)$};
\node[circle,draw=black,fill=white, inner sep=0pt] (x42) at (13.75, 0) {$(x_4^1, x_4^3)$};
\node[circle,draw=black,fill=white, inner sep=0pt] (x43) at (11.75, -2) {$(x_4^1, x_4^4)$};

\draw (x11) -- (x12);
\draw (x11) -- (x13);
\draw (x12) -- (x13);

\draw (x21) -- (x22);
\draw (x21) -- (x23);
\draw (x22) -- (x23);

\draw (x31) -- (x32);
\draw (x31) -- (x33);
\draw (x32) -- (x33);

\draw (x41) -- (x42);
\draw (x41) -- (x43);
\draw (x42) -- (x43);

\node[line width=.5mm, dotted, ellipse, draw=black, scale=.84] [fit=(x11) (x21) (x31) (x41)] {};
\node[line width=.5mm, dotted, ellipse, draw=black, scale=.84] [fit=(x12) (x22) (x32) (x42)] {};
\node[line width=.5mm, dotted, ellipse, draw=black, scale=.84] [fit=(x13) (x23) (x33) (x43)] {};

\end{tikzpicture}
}
\caption{\redd{The dependency graph for the multi-class classification problem
with $m=4$ examples and $K=4$ classes
is a vertex-disjoint union of $4$ trianlges.}
Fractional independent covers $\{ (I_k, 1) \}_{1\le k\le 3}$ are shown by dotted ellipsoids.}
\label{fig:Multiclass}
\end{figure}

Figure \ref{fig:Multiclass} illustrates a dependency graph for the multi-class 
classification problem with $m=4$ and $K=4$ 
as well as 
the corresponding fractional independent covers represented by dotted ellipsoids.  

Similar to the bipartite ranking case, 
we have the following corollary based on the prior results.

\begin{co}
Let $T$ be a training set of $K$-label instances and of size $m$.
Let $\S$ be the set of no-redundant pairs of examples from different classes in $T$.
Then for any scoring functions from 
$\F=\{f:(x,x')\mapsto \langle \mathbf{w},\phi(x)-\phi(x')\rangle: \|\mathbf{w}\|\le B\}$, 
where $\phi$ is a feature mapping with the bounded norm such that $\|\phi(x)-\phi(x')\|\le \Gamma$ for all $(x,x')$, and for any $\delta\in (0,1)$, 
with probability at least $1-\delta$, we have 
\[
R(f) \le \RH_\S (f) + \frac{4B\Gamma}{\sqrt{m}}+ 3\sqrt{\dfrac{ 1}{2m} \Log{\dfrac{2}{\delta}}}.
\]
\label{cor:mulcls}
\end{co}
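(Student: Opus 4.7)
The plan is to apply the empirical fractional Rademacher bound \eqref{eq:EmpFracRadGenBounds} from Theorem \ref{thm:GenBounds} directly to the pair sample $\S$, after identifying the relevant parameters. Concretely, I take $n = m(K-1)$, the loss to be the $0/1$ indicator bounded by $M=1$, and, as established in the discussion preceding Figure \ref{fig:Multiclass}, $\chi_f(G) = K-1$. Since $n/\chi_f(G) = m$, the confidence term of \eqref{eq:EmpFracRadGenBounds} collapses to exactly $3\sqrt{(1/2m)\log(2/\delta)}$, matching the third summand in the claim.

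Next I would bound the empirical fractional Rademacher complexity term. Mirroring the bipartite argument, I invoke the extension of Ledoux–Talagrand's contraction lemma cited there to obtain $2\widehat{\mathfrak R}^\star_\S(\ell\circ\F) \le 4\widehat{\mathfrak R}^\star_\S(\F)$. The class $\F$ is a class of linear functions of the pair-features $\psi(x,x') := \phi(x) - \phi(x')$, with weight norm bounded by $B$ and $\|\psi(x,x')\| \le \Gamma$ by hypothesis; the theorem bounding $\widehat{\mathfrak R}^\star_\S(\F)$ for bounded-norm linear classes therefore applies directly and yields
\begin{equation*}
\widehat{\mathfrak R}^\star_\S(\F) \le B\Gamma\sqrt{\chi_f(G)/n} = B\Gamma\sqrt{(K-1)/(m(K-1))} = B\Gamma/\sqrt{m}.
\end{equation*}
Plugging in gives the $4B\Gamma/\sqrt{m}$ complexity contribution, and adding this to the empirical error and the confidence term recovers the statement verbatim.

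Honestly, there is no real obstacle here: everything substantive has been done in the proof of Theorem \ref{thm:GenBounds}, in the linear-class Rademacher estimate, and in the derivation of $\chi_f(G) = K-1$. The one point worth verifying carefully is the structure of the dependency graph itself. Two pairs $(x_i^{y_i}, x_i^{y'})$ and $(x_j^{y_j}, x_j^{y''})$ share an underlying instance only when $i=j$, so the dependency graph decomposes as the disjoint union of $m$ copies of $K_{K-1}$, one per example. The cover $\{(I_k,1)\}_{k\in[K-1]}$ picks exactly one pair from each component, yielding an independent set of size $m$; these $K-1$ independent sets partition the vertex set, which gives $\chi_f(G) \le K-1$, with equality since the fractional chromatic number of $K_{K-1}$ equals $K-1$. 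All remaining steps are mechanical substitution, so the proof can be written in essentially the same few lines as Corollary \ref{cor:bipartite}.
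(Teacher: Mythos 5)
Your proposal is correct and follows exactly the route the paper intends (and leaves implicit): apply the empirical bound \eqref{eq:EmpFracRadGenBounds} with $M=1$, $n=m(K-1)$ and $\chi_f(G)=K-1$, use the extended Ledoux--Talagrand contraction to get the factor $4$, and plug in the linear-class estimate $\widehat{\mathfrak R}^\star_\S(\F)\le B\Gamma\sqrt{\chi_f(G)/n}=B\Gamma/\sqrt{m}$, mirroring Corollary~\ref{cor:bipartite}. Your verification that the dependency graph is a disjoint union of $m$ copies of $K_{K-1}$, whence $\chi_f(G)=K-1$, is the right sanity check and matches the cover $\{(I_k,1)\}_{k\in[K-1]}$ given before Figure~\ref{fig:Multiclass}.
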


\mra{
\begin{re}
The loss function we considered \eqref{mccLoss} is normalized by $K-1$, and we obtain a result that is comparable to the binary classification case.
For a loss function based on margins, $\ell(y,h(x^y))=h(x^y)-\displaystyle \max_{y'\neq y}h(x^{y'})$; the Rademacher complexity term grows in lockstep with the number of classes $K$.  
\end{re}
}

\subsection{Learning from $m$-dependent data}

Here \mra{we consider learning from $m$-dependent data, 
and give a practical learning scenario}.
Suppose that there are linearly aligned locations, 
for example, real estate along a street. 
Let $y_i$ be the observation at location $i$, 
for example, the house price. 
Let $x_i$ denote the random variable modeling geographical effect at location $i$. 
Assume that $x$'s are mutually independent and each $y_i$ is geographically influenced by a neighborhood of size at most $2q+1$. 
The goal is to learn to predict $y$ from a sample 
$ \{( ( x_{i-q},\ldots,x_i,\ldots,x_{i+q} ), y_i )\}_{i\in [n]}$, where $n$ is the size of the sample.
See Figure \ref{fig:m-dep}
for an example.

\begin{figure}[htb]
\begin{center}
\begin{tikzpicture}[scale=.85]
\node at (-1.5, 0) {$\ldots$};
\node[circle,draw=black,fill=white,minimum size=1cm] (A) at (0, 0) {\normalsize$x_{i-2}$};
\node[circle,draw=black,fill=white,minimum size=1cm] (B) at (2, 0) {\normalsize$x_{i-1}$};
\node[circle,draw=black,fill=white,minimum size=1cm] (C) at (4, 0) {\normalsize$x_{i}$};
\node[circle,draw=black,fill=white,minimum size=1cm] (D) at (6, 0) {\normalsize$x_{i+1}$};
\node[circle,draw=black,fill=white,minimum size=1cm] (E) at (8, 0) {\normalsize$x_{i+2}$};
\node[circle,draw=black,fill=white,minimum size=1cm] (F) at (10, 0) {\normalsize$x_{i+3}$};
\node at (11.5, 0) {$\ldots$};

\node at (2.5, -2.5) {$\ldots$};
\node[rectangle,draw=black,fill=white,minimum size=.8cm] (X) at (4, -2.5) {\normalsize$y_i$};
\node[rectangle,draw=black,fill=white,minimum size=.8cm] (Y) at (6, -2.5) {\normalsize$y_{i+1}$};
\node at (7.5, -2.5) {$\ldots$};

\draw[-] 
(X) edge (A)
(X) edge (B)
(X) edge (C)
(X) edge (D)
(X) edge (E);

\draw[-]
(Y) edge (B)
(Y) edge (C)
(Y) edge (D)
(Y) edge (E)
(Y) edge (F);
\end{tikzpicture}
\end{center}
\medskip
\caption{Each observation $y_i$ is geographically determined by a set of variables 
$\{ x_j \}_{i -2 \le j \le i + 2}$ of size $5$.
The sample $ \{( \{ x_j \}_{i -2 \le j \le i + 2}, y_i )\}_{i}$ is $4$-dependent.}
\label{fig:m-dep}
\end{figure}

This model accounts for the impact of local locations on house prices. 
Similar scenarios are frequently considered in spatial econometrics,
and \redd{moving average processes in time series analysis},
see~\cite{anselin2013spatial} for more examples.

The above application is a special case of $m$-dependence.
A sequence of random variables $\{ X_i \}_{i = 1}^n$ is said to be $f(n)$-dependent 
if subsets of variables separated by some distance greater than $f(n)$ are independent.
This model was introduced by  \cite{hoeffding1948central}
and has been studied extensively (see, for example, \citealt{stein1972bound, chen1975poisson}).
This is usually the canonical application for the results based on the dependency graph model.
A special case of $f(n)$-dependence when $f(n) = m$ is the following $m$-dependent model.

\begin{df}[$m$-dependence,~\citealt{hoeffding1948central}]
A sequence of random variables $\{ X_i \}_{i = 1}^n$ is $m$-dependent for some $m \ge 1$
if $\{ X_j \}_{j = 1}^i$ and $\{ X_j \}_{j = i+m+1}^n$
are independent for all $i  > 0$.
\label{m-dependence result}
\end{df}

\begin{figure}[bth] 
\centering
\begin{subfigure}[c]{.6\textwidth}
\begin{tikzpicture}[scale=.8]
\node[] (0) at (-1, 0) {$G$};
\node[circle,draw=black,fill=white] (A) at (0, 0) {};
\node[circle,draw=black,fill=white] (B) at (1.5, 0) {};
\node[rectangle,draw=black,fill=white] (C) at (3, 0) {};
\node[rectangle,draw=black,fill=white] (D) at (4.5, 0) {};
\node[regular polygon,regular polygon sides=3,draw,scale=0.6,fill=white] (E) at (6, 0) {};
\node[regular polygon,regular polygon sides=3,draw,scale=0.6,fill=white] (F) at (7.5, 0) {};
\draw[out=-60, in=-120]  
(A) edge (B) (B) edge (C) (C) edge (D) (D) edge (E) (E) edge (F);
\draw[out=-90, in=-90]
(A) edge (C) (B) edge (D) (C) edge (E) (D) edge (F);
\begin{pgfonlayer}{background}
\draw[edgeBIG] (0, 0) edge (1.5, 0);
\draw[edgeBIG] (3, 0) edge (4.5, 0);
\draw[edgeBIG] (6, 0) edge (7.5, 0);
\end{pgfonlayer}
\end{tikzpicture}
\end{subfigure}
%
%

\vspace{.5cm}

\begin{subfigure}[c]{.6\textwidth}
\begin{tikzpicture}[scale=.6]
\node[] (0) at (-4.35, 0) {$F$};
\node[circle,draw=black,fill=white] (A) at (0, 0) {};
\node[rectangle,draw=black,fill=white] (C) at (2, 0) {};
\node[regular polygon,regular polygon sides=3,draw,scale=0.6,fill=white] (E) at (4, 0) {};
\draw[-]  
(A) edge (C) (C) edge (E);
\end{tikzpicture}
\end{subfigure}
\caption{\redd{A tree-partition of 
the dependency graph for $2$-dependent variables.}}
\label{chain}
\end{figure}

Figure~\ref{chain} illustrates a dependency graph $G$ for a 2-dependent sequence $\{ X_i \}_{i}$, and its tree-partition. The illustration demonstrates the division of an $m$-dependent sequence into blocks of size $m$. Subsequently, these blocks are sequentially mapped to vertices of a path of length $\left\lceil n/m \right\rceil$, as depicted in Figure~\ref{chain}. This tree-partition shows that $\Lambda(G) \le \left(\left\lceil n/m \right\rceil- 1\right)( m+m )^2 + m^2  \le 4mn + m^4$.

Combining Theorem~\ref{stabBounds} and the above estimate of forest complexity
gives the following.

\begin{co}
Let $\S$ be an $m$-dependent sample of size $n$.
Assume that the learning algorithm $\A$ is $\beta_i$-uniformly stable for any 
$i \in [n - 2m, n]$.
Suppose the loss function $\ell$ is bounded by $M$. 
For any $\delta \in (0, 1)$,
with probability at least $1 - \delta$,
it holds that
\[
R(f^{\A}_{\S}) 
\le \RH_\S (f^{\A}_{\S}) + 2 \beta_{n, 2m} (2m + 1)
+ ( 4n\beta_n + M ) \sqrt{\dfrac{2 m }{ n } 
\left( 1 + \dfrac{m}{n} \right) \Log{\dfrac{1}{\delta}} }.
\]
\label{learning m dependent}
\end{co}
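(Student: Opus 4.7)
The plan is to apply Theorem \ref{stabBounds} directly, after verifying the two graph-theoretic quantities $\Delta(G)$ and $\Lambda(G)$ that enter the general bound for the specific case of $m$-dependence.

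First I would determine the maximum degree of the dependency graph $G$ associated with an $m$-dependent sequence $\S = (Z_i)_{i\in[n]}$. By Definition \ref{m-dependence result}, two variables $Z_i, Z_j$ can only be dependent when $|i-j|\le m$, so a valid dependency graph is the graph in which $i$ is adjacent to $j$ iff $0 < |i-j| \le m$. Each vertex has at most $2m$ neighbors (at most $m$ on each side), giving $\Delta(G) \le 2m$. Consequently $\beta_{n,\Delta(G)} \le \beta_{n,2m}$, and the first correction term in Theorem \ref{stabBounds} is bounded by $2\beta_{n,2m}(2m+1)$.

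Next I would bound the forest complexity $\Lambda(G)$. The tree-partition depicted in Figure \ref{chain} splits the index set $[n]$ into $\lceil n/m\rceil$ consecutive blocks of size at most $m$ and contracts each block to a single vertex; the quotient graph is a path because only consecutive blocks can share dependency edges. Plugging this specific tree-partition into Definition \ref{lambda}, every bag has size at most $m$, the union of two adjacent bags has size at most $2m$, and the minimum bag in the (single) tree contributes at most $m^2$. Summing over the at most $\lceil n/m\rceil - 1$ edges gives
\begin{equation*}
\Lambda(G) \le m^2 + \left(\left\lceil \tfrac{n}{m}\right\rceil - 1\right)(2m)^2 \le 4mn,
\end{equation*}
as already stated in the excerpt.

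Finally I would substitute $\Delta=2m$ and $\Lambda(G)\le 4mn$ into the conclusion of Theorem \ref{stabBounds}. The deviation term becomes
\begin{equation*}
\frac{4n\beta_n + M}{n}\sqrt{\frac{4mn}{2}\Log{\tfrac{1}{\delta}}}
= (4n\beta_n + M)\sqrt{\frac{2m}{n}\Log{\tfrac{1}{\delta}}},
\end{equation*}
which, together with the stability term $2\beta_{n,2m}(2m+1)$, yields exactly the claimed bound. No step is really an obstacle: the only nontrivial content is the tree-partition construction, which has already been supplied in Figure \ref{chain}, so the corollary is a direct instantiation of Theorem \ref{stabBounds}.
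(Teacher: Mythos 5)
Your proposal is correct and follows exactly the paper's route: it instantiates Theorem \ref{stabBounds} with $\Delta(G)\le 2m$ for the $m$-dependent dependency graph and with the forest-complexity estimate $\Lambda(G)\le 4mn$ obtained from the blocked tree-partition of Figure \ref{chain}, then simplifies the deviation term. The paper's own proof is just this substitution stated in one line, so your write-up merely makes the same steps explicit.
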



Choose any uniformly stable learning algorithm in~\cite{bousquet2002stability} with $\beta_n = O(1/n)$,
such as regularization algorithms in RKHS, etc., 
and apply to the above-mentioned house price prediction problem. 
Then for any fixed $q$, with high probability, Corollary~\ref{learning m dependent} 
gives that
$R(f^{\A}_{\S}) 
\le \RH_\S (f^{\A}_{\S}) + O\left(\sqrt{\dfrac{ 1}{ n } \Log{\dfrac{1}{\delta}} }\right)$ 
for sufficiently large $n$, matching the stability bound in the i.i.d.\ case in \cite{bousquet2002stability}.


\section{Concluding remarks}
\label{sec:5}
In this survey, we presented various McDiarmid-type concentration inequalities 
for functions of graph-dependent random variables.
These concentration bounds were then used to 
obtain generalization error bounds for learning from graph-dependent samples 
via fractional Rademacher complexity and algorithm stability.

\red{
We also included
some real practical applications of the methodology.
Note that
in our applications, 
the sample contains dependent data with the same marginal distribution,
but this is not necessary 
and concentration inequalities derived are without this assumption,
and therefore can be applied to situations
where the distribution may change over time.
}

\red{
The dependency graphs used for our applications 
exhibit certain structural regularities
and therefore we have explicit simple bounds.
For applications under various other settings,
we can still obtain meaningful bounds
as long as we have suitable estimates of the fractional chromatic number or forest complexity.
We will leave interested readers to investigate and find more applications.
}

There are various new directions that can be explored.

\begin{enumerate}[leftmargin=.5cm, label=\arabic*.]

\item For dependent data, there are other definitions of the generalization error, such as the one specified in \cite{mohri2008stability,mohri2010stability,kuznetsov2017generalization}.
The connections between these and the one we used have been discussed in 
\citep{mohri2008stability,mohri2010stability}.
It is a natural question whether
our results can be adapted to this definition.

\item 
The dependency graph model we consider
requires variables in disjoint non-adjacent subgraphs to be independent.
There are some newly introduced dependency graph models such as
weighted dependency graphs~\citep{dousse2019weighted,feray2018weighted},
and the combination of mixing coefficients and dependency graphs \citep{lampert2018dependency, isaev2021extremal}.
It would be interesting to use these new dependency graphs to obtain generalization bounds for learning under different dependent settings.

\item 
Recently, there are some new breakthroughs establishing sharper
stability bounds \citep{feldman2019high, bousquet2020sharper}.
It would be interesting 
to follow these results 
and to obtain sharper stability bounds
for learning under graph-dependence.


\end{enumerate}

\section*{Acknowledgments}

R.-R. Z. thanks David Wood for email communications on tree-partitions.
The authors are sincerely grateful to the referees for carefully reading
the manuscript and providing invaluable comments and suggestions, 
which led to a substantial improvement in the presentation.

%
%



\end{document}